\icmltitlerunning{\METHOD: State-Covering Self-Supervised Reinforcement Learning}
\def\Figref#1{Figure~\ref{#1}}
\def\eqref#1{equation~\ref{#1}}
\def\Eqref#1{Equation~\ref{#1}}
\def\1{\bm{1}}
\DeclareMathAlphabet{\mathsfit}{\encodingdefault}{\sfdefault}{m}{sl}
\SetMathAlphabet{\mathsfit}{bold}{\encodingdefault}{\sfdefault}{bx}{n}
\def\gF{{\mathcal{F}}}
\def\gH{{\mathcal{H}}}
\def\gQ{{\mathcal{Q}}}
\def\gX{{\mathcal{X}}}
\newcommand{\E}{\mathbb{E}}
\newcommand{\R}{\mathbb{R}}
\newcommand{\KL}{D_{\mathrm{KL}}}
\DeclarePairedDelimiterX{\infdivx}[2]{(}{)}{%
  #1\;\delimsize\|\;#2%
}
\newcommand{\kld}{\KL\infdivx}
\newcommand{\Var}{\mathrm{Var}}
\newcommand{\Cov}{\mathrm{Cov}}
\patchcmd{\hyper@makecurrent}{%
    \ifx\Hy@param\Hy@chapterstring
        \let\Hy@param\Hy@chapapp
    \fi
}{%
    \iftoggle{inappendix}{
        \@checkappendixparam{chapter}%
        \@checkappendixparam{section}%
        \@checkappendixparam{subsection}%
        \@checkappendixparam{subsubsection}%
        \@checkappendixparam{paragraph}%
        \@checkappendixparam{subparagraph}%
    }{}%
}{}{\errmessage{failed to patch}}
\newcommand*{\@checkappendixparam}[1]{%
    \def\@checkappendixparamtmp{#1}%
    \ifx\Hy@param\@checkappendixparamtmp
        \let\Hy@param\Hy@appendixstring
    \fi
}
\apptocmd{\appendix}{\toggletrue{inappendix}}{}{\errmessage{failed to patch}}
\apptocmd{\subappendices}{\toggletrue{inappendix}}{}{\errmessage{failed to patch}}
\newcommand\strike{\bgroup\markoverwith{\textcolor{red}{\rule[0.5ex]{2pt}{0.4pt}}}\ULon}
\definecolor{darkgreen}{RGB}{0,180,56}
\newcommand{\fcaption}[1]{\caption{{\footnotesize{#1}}}}
\newcommand{\dent}{{d_\gH}}
\newcommand{\limt}{\lim_{t \rightarrow \infty}}
\newcommand{\pap}{{\theta}}
\newcommand{\papprox}{{p_\pap}}
\newcommand{\pspgt}{p(\SF \mid \pgt)}
\newcommand{\wt}{{w_{t, \alpha}}}
\newcommand{\FullSpaceDim}{n}
\newcommand{\FullSpace}{{\mathbb{R}^\FullSpaceDim}}
\newcommand{\Imgs}{{\mathcal{S}}}
\newcommand{\Ss}{{\mathcal{S}}}
\newcommand{\As}{{\mathcal{A}}}
\newcommand{\Gs}{{\mathcal{G}}}
\newcommand{\support}{{\text{support}}}
\newcommand{\G}{\mathbf{G}}
\newcommand{\g}{\mathbf{g}}
\newcommand{\SF}{{\mathbf{S}}}
\newcommand{\St}{\SF}
\newcommand{\st}{\mathbf{s}}
\newcommand{\at}{\mathbf{a}}
\newcommand{\dyn}{p(\st_{t+1} \mid \st_t, \at_t)}
\mathchardef\mhyphen="2D
\newcommand{\HG}{\gH(\G)}
\newcommand{\HGS}{\gH(\G \mid \SF)}
\newcommand{\Loss}{\mathcal{L}}
\newcommand{\pstate}{{p^S_{\pgparam}}}
\newcommand{\pstatet}{{p^S_{\pgparam_t}}}
\newcommand{\U}{{U_\Imgs}}
\newcommand{\pgparam}{{\phi}}
\newcommand{\pgparamtt}{{\pgparam_{t+1}}}
\newcommand{\pg}{{q^G_{\pgparam}}}
\newcommand{\pgt}{{q^G_{\pgparam_t}}}
\newcommand{\pgtt}{{q^G_{\pgparam_{t+1}}}}
\newcommand{\pet}{\pstatet}
\newcommand{\pskewed}{{p_{\text{skewed}}}}
\newcommand{\pskewedt}{{p_{\text{skewed}_t}}}
\newcommand{\pskewedtt}{{p_{\text{skewed}_{t+1}}}}
\newcommand{\pskewedn}{{p_{\text{skewed}_n}}}
\newcommand{\pskewednn}{{p_{\text{skewed}_{n+1}}}}
\newcommand{\ptt}{{p_\theta^t}}
\newcommand{\htt}{{\gH_\theta^t}}
\newcommand{\METHOD}{Skew-Fit\xspace}
\newtheorem*{rep@theorem}{\rep@title}
\newcommand{\newreptheorem}[2]{%
\newenvironment{rep#1}[1]{%
 \def\rep@title{#2 \ref{##1}}%
 \begin{rep@theorem}}%
 {\end{rep@theorem}}}
\newtheorem{theorem}{Theorem}[section]
\newtheorem{lemma}[theorem]{Lemma}
\begin{document}
\twocolumn[
\icmltitle{\METHOD: State-Covering Self-Supervised\\Reinforcement Learning}


\icmlsetsymbol{equal}{*}

\begin{icmlauthorlist}
\icmlauthor{Vitchyr H. Pong}{equal,berk}
\icmlauthor{Murtaza Dalal}{equal,berk}
\icmlauthor{Steven Lin}{equal,berk}
\icmlauthor{Ashvin Nair}{berk}
\icmlauthor{Shikhar Bahl}{berk}
\icmlauthor{Sergey Levine}{berk}
\end{icmlauthorlist}

\icmlaffiliation{berk}{University of California, Berkeley}

\icmlcorrespondingauthor{Vitchyr H. Pong}{vitchyr@eecs.berkeley.edu}

\icmlkeywords{deep reinforcement learning, goal-conditioned reinforcement learning, goals, exploration, goal-directed exploration}

\vskip 0.3in
]
\printAffiliationsAndNotice{\icmlEqualContribution}

\begin{abstract}
Autonomous agents that must exhibit flexible and broad capabilities will need to be equipped with large repertoires of skills.
Defining each skill with a manually-designed reward function limits this repertoire and imposes a manual engineering burden.
Self-supervised agents that set their own goals can automate this process, but designing appropriate goal setting objectives can be difficult, and often involves heuristic design decisions.
In this paper, we propose a formal exploration objective for goal-reaching policies that maximizes state coverage.
We show that this objective is equivalent to maximizing goal reaching performance together with the entropy of the goal distribution, where goals correspond to full state observations.
To instantiate this principle, we present an algorithm called \METHOD for learning a maximum-entropy goal distributions and prove that, under regularity conditions, \METHOD converges to a uniform distribution over the set of valid states, even when we do not know this set beforehand.
Our experiments show that combining \METHOD for learning goal distributions with existing goal-reaching methods outperforms a variety of prior methods on open-sourced visual goal-reaching tasks and that \METHOD enables a real-world robot to learn to open a door, entirely from scratch, from pixels, and without any manually-designed reward function.
\end{abstract}

\section{Introduction}\label{sec:introduction}
\setlength{\intextsep}{-.9pt}
\begin{figure}
  \includegraphics[width=\linewidth]{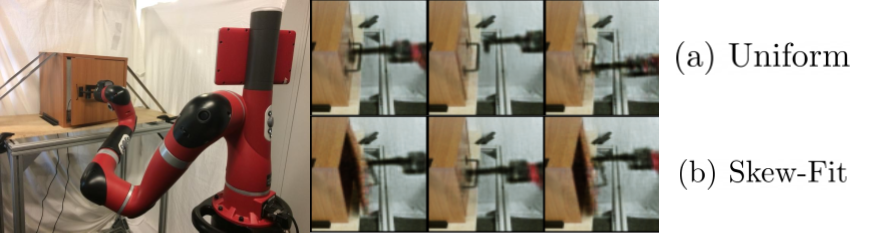}
  \fcaption{
  Left: Robot learning to open a door with \METHOD, without any task reward.
  Right: Samples from a goal distribution when using (a) uniform and (b) \METHOD sampling.
  When used as goals, the diverse samples from \METHOD encourage the robot to practice opening the door more frequently.
  }
  \label{fig:offline-sk-real}
\end{figure}
Reinforcement learning (RL) provides an appealing formalism for automated learning of behavioral skills, but separately learning every potentially useful skill becomes prohibitively time consuming, both in terms of the experience required for the agent and the effort required for the user to design reward functions for each behavior.
What if we could instead design an unsupervised RL algorithm that automatically explores the environment and iteratively distills this experience into general-purpose policies that can accomplish new user-specified tasks at test time?

In the absence of any prior knowledge, an effective exploration scheme is one that visits as many states as possible, allowing a policy to autonomously prepare for user-specified tasks that it might see at test time.
We can formalize the problem of visiting as many states as possible as one of maximizing the \emph{state entropy} $\gH(\SF)$ under the current policy.\stepcounter{footnote}\footnote{We consider the distribution over terminal states in a finite horizon task  and believe this work can be extended to infinite horizon stationary distributions.}
Unfortunately, optimizing this objective alone does not result in a policy that can solve new tasks: it only knows how to maximize state entropy.
In other words, to develop principled unsupervised RL algorithms that result in useful policies, maximizing $\gH(\SF)$ is not enough.
We need a mechanism that allows us to reuse the resulting policy to achieve new tasks at test-time.

We argue that this can be accomplished by performing \textit{goal-directed exploration}:
a policy should autonomously visit as many states as possible, but after autonomous exploration, a user should be able to reuse this policy by giving it a goal $\G$ that corresponds to a state that it must reach.
While not all test-time tasks can be expressed as reaching a goal state, a wide range of tasks can be represented in this way.
Mathematically, the goal-conditioned policy should minimize the conditional entropy over the states given a goal, $\gH(\SF \mid \G)$, so that there is little uncertainty over its state given a commanded goal.
This objective provides us with a principled way to train a policy to explore all states (maximize $\gH(\SF)$) such that the state that is reached can be determined by commanding goals (minimize $\gH(\SF \mid \G)$).

Directly optimizing this objective is in general intractable, since it requires optimizing the entropy of the marginal state distribution, $\gH(\SF)$.
However, we can sidestep this issue by noting that the objective is the mutual information between the state and the goal, $I(\SF; \G)$, which can be written as:
{
  \setlength{\abovedisplayskip}{18pt}%
  \setlength{\belowdisplayskip}{0pt}%
  \setlength{\abovedisplayshortskip}{18pt}%
  \setlength{\belowdisplayshortskip}{0pt}
\begin{align}
    \label{eq:hg-hgs}
    \gH(\SF) - \gH(\SF|\G)
    =
    I(\SF; \G)
    =
    \gH(\G) - \gH(\G|\SF).
\end{align}
}

\Eqref{eq:hg-hgs} thus gives an equivalent objective for an unsupervised RL algorithm:
the agent should set diverse goals, maximizing $\gH(\G)$, and learn how to reach them, minimizing $\gH(\G \mid \SF)$.

While learning to reach goals is the typical objective studied in goal-conditioned RL~\citep{kaelbling1993goals,andrychowicz2017her}, setting goals that have maximum diversity is crucial for effectively learning to reach all possible states.
Acquiring such a maximum-entropy goal distribution is challenging in environments with complex, high-dimensional state spaces, where even knowing which states are valid presents a major challenge.
For example, in image-based domains, a uniform goal distribution requires sampling uniformly from the set of realistic images, which in general is unknown a priori.

Our paper makes the following contributions.
First, we propose a principled objective for unsupervised RL, based on \autoref{eq:hg-hgs}.
While a number of prior works ignore the $\gH(\G)$ term, we argue that jointly optimizing the entire quantity is needed to develop effective exploration.
Second, we present a general algorithm called \METHOD and prove that under regularity conditions \METHOD learns a sequence of generative models that converges to a uniform distribution over the goal space, even when the set of valid states is unknown (e.g., as in the case of images).
Third, we describe a concrete implementation of \METHOD and empirically demonstrate that this method achieves state of the art results compared to a large number of prior methods for goal reaching with visually indicated goals, including a real-world manipulation task, which requires a robot to learn to open a door from scratch in about five hours, directly from images, and without any manually-designed reward function.

\setlength{\textfloatsep}{0.5\baselineskip plus 0.2\baselineskip minus 0.2\baselineskip}
\setlength{\dbltextfloatsep}{0.5\baselineskip plus 0.2\baselineskip minus 0.2\baselineskip}

\section{Problem Formulation}\label{sec:background}
To ensure that an unsupervised reinforcement learning agent learns to reach all possible states in a controllable way, we maximize the mutual information between the state $\SF$ and the goal $\G$, $I(\SF; \G)$, as stated in \Eqref{eq:hg-hgs}. This section discusses how to optimize \Eqref{eq:hg-hgs} by splitting the optimization into two parts: minimizing $\gH(\G \mid \SF)$ and maximizing $\gH(\G)$.

\subsection{Minimizing $\gH(\G \mid \SF)$: Goal-Conditioned Reinforcement Learning}
Standard RL considers a Markov decision process (MDP), which has a state space $\Ss$, action space $\As$, and unknown dynamics $\dyn: \Ss \times \Ss \times \As \mapsto [0, +\infty)$.
Goal-conditioned RL also includes a goal space $\Gs$. For simplicity, we will assume in our derivation that the goal space matches the state space, such that $\Gs = \Ss$, though the approach extends trivially to the case where $\Gs$ is a hand-specified subset of $\Ss$, such as the global XY position of a robot.
A goal-conditioned policy $\pi(\at \mid \st, \g)$ maps a state $\st \in \Ss$ and goal $\g \in \Ss$ to a distribution over actions $\at \in \As$, and its objective is to reach the goal, i.e., to make the current state equal to the goal.

Goal-reaching can be formulated as minimizing $\gH(\G \mid \SF)$, and many practical goal-reaching algorithms~\citep{kaelbling1993goals,lillicrap2015continuous, schaul2015uva, andrychowicz2017her, nair2018rig, pong2018tdm,florensa2018selfsupervised} can be viewed as approximations to this objective by observing that the optimal goal-conditioned policy will deterministically reach the goal, resulting in a conditional entropy of zero: $\gH(\G \mid \SF) = 0$.
See \autoref{sec:analysis-appendix} for more details.
Our method may thus be used in conjunction with any of these prior goal-conditioned RL methods in order to jointly minimize $\gH(\G \mid \SF)$ and maximize $\gH(\G)$.

\subsection{Maximizing $\gH(\G)$: Setting Diverse Goals}\label{sec:prelim-max-ent}
We now turn to the problem of setting diverse goals or, mathematically, maximizing the entropy of the goal distribution $\gH(\G)$.
Let $\U$ be the uniform distribution over $\Imgs$, where we assume $\Imgs$ has finite volume so that the uniform distribution is well-defined.
Let $\pg$ be the goal distribution from which goals $\G$ are sampled, parameterized by $\pgparam$.
Our goal is to maximize the entropy of $\pg$, which we write as $\gH(\G)$.
Since the maximum entropy distribution over $\Imgs$ is the uniform distribution $\U$, maximizing $\gH(\G)$ may seem as simple as choosing the uniform distribution to be our goal distribution: $\pg = \U$.
However, this requires knowing the uniform distribution over valid states, which may be difficult to obtain when $\Imgs$ is a subset of $\FullSpace$, for some $\FullSpaceDim$.
For example, if the states correspond to images viewed through a robot's camera, $\Imgs$ corresponds to the (unknown) set of valid images of the robot's environment, while $\FullSpace$ corresponds to all possible arrays of pixel values of a particular size.
In such environments, sampling from the uniform distribution $\FullSpace$ is unlikely to correspond to a valid image of the real world.
Sampling uniformly from $\Imgs$ would require knowing the set of all possible valid images, which we assume the agent does not know when starting to explore the environment.

While we cannot sample arbitrary states from $\Imgs$, we can sample states by performing goal-directed exploration.
To derive and analyze our method, we introduce a simple model of this process:
a goal $\G \sim \pg$ is sampled from the goal distribution $\pg$, and
then the goal-conditioned policy $\pi$ attempts to achieve this goal, which results in a distribution of terminal states $\SF \in \Imgs$.
We abstract this entire process by writing the resulting marginal distribution over $\SF$ as \mbox{$\pstate (\SF) \triangleq \int_\mathcal{G} \pg(\G) p(\SF \mid \G) d\G$}, where the subscript $\pgparam$ indicates that the marginal $\pstate$ depends indirectly on $\pg$ via the goal-conditioned policy $\pi$.
We assume that $\pstate$ has full support, which can be accomplished with an epsilon-greedy goal reaching policy in a communicating MDP.
We also assume that the entropy of the resulting state distribution $\gH(\pstate)$ is no less than the entropy of the goal distribution $\gH(\pg)$.
Without this assumption, a policy could ignore the goal and stay in a single state, no matter how diverse and realistic the goals are.
\footnote{
Note that this assumption does \textbf{not} require that the entropy of $\pstate$ is strictly larger than the entropy of the goal distribution, $\pg$.
}
This simplified model allows us to analyze the behavior of our goal-setting scheme separately from any specific goal-reaching algorithm. We will however show in Section~\ref{sec:experiments} that we can instantiate this approach into a practical algorithm that jointly learns the goal-reaching policy. In summary, our goal is to acquire a maximum-entropy goal distribution $\pg$ over valid states $\Imgs$, while only having access to state samples from $\pstate$.

\section{\METHOD: Learning a Maximum Entropy Goal Distribution}
\label{sec:method}
Our method, \METHOD, learns a maximum entropy goal distribution $\pg$ using samples collected from a goal-conditioned policy.
We analyze the algorithm and show that \METHOD maximizes the goal distribution entropy, and present a practical instantiation for unsupervised deep RL.

\subsection{\METHOD Algorithm}\label{sec:method-description}
To learn a uniform distribution over \emph{valid} goal states, we present a method that iteratively increases the entropy of a generative model $\pg$.
In particular, given a generative model $\pgt$ at iteration $t$, we want to train a new generative model, $\pgtt$ that has higher entropy.
While we do not know the set of valid states $\Imgs$, we could sample states \mbox{$\st_n \overset{\text{iid}}{\sim} \pstatet$} using the goal-conditioned policy,
and use the samples to train $\pgtt$.
However, there is no guarantee that this would increase the entropy of $\pgtt$.

\begin{figure}[ht]
    \centering
    \includegraphics[width=\linewidth]{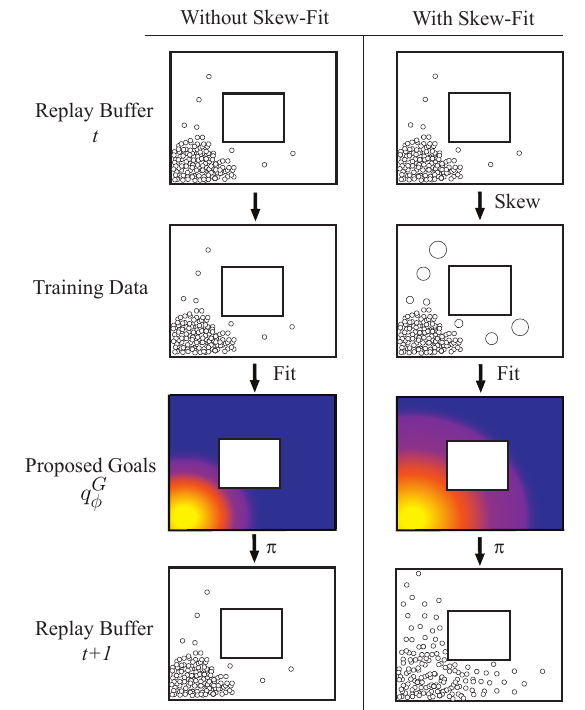}
    \fcaption{Our method, \METHOD, samples goals for goal-conditioned RL.
    We sample states from our replay buffer, and give more weight to rare states.
    We then train a generative model $\pg_{t+1}$ with the weighted samples.
    By sampling new states with goals proposed from this new generative model, we obtain a higher entropy state distribution in the next iteration.}
    \label{fig:main-fig}
\end{figure}

The intuition behind our method is simple: rather than fitting a generative model to these samples $\st_n$, we \textit{skew} the samples so that rarely visited states are given more weight.
See \Figref{fig:main-fig} for a visualization of this process.
How should we skew the samples if we want to maximize the entropy of $\pgtt$?
If we had access to the density of each state, $\pet(\St)$, then we could simply weight each state by $1/\pet(\St)$.
We could then perform maximum likelihood estimation (MLE) for the uniform distribution by using the following importance sampling (IS) loss to train $\pgparamtt$:
\begin{align}
\Loss(\pgparam)\nonumber
    &= \E_{\St \sim \U} \left[ \log \pg(\St)\right]
\\\nonumber
    &= \E_{\St \sim \pet}\left[ \frac{\U(\St)}{\pet(\St)} \log \pg(\St)\right]
\\\nonumber
    &\propto \E_{\St \sim \pet}\left[ \frac{1}{\pet(\St)}\log \pg(\St)\right]
\nonumber
\end{align}
where we use the fact that the uniform distribution $\U(\St)$ has constant density for all states in $\Imgs$.
However, computing this density $\pet(\St)$ requires marginalizing out the MDP dynamics, which requires an accurate model of both the dynamics and the goal-conditioned policy.

We avoid needing to model the entire MDP process by approximating $\pet(\St)$ with our previous learned generative model: \mbox{$\pstatet(\St) \approx \pgt(\St)$}.
We therefore weight each state by the following weight function
\begin{align}\label{eq:weight-defn}
    \wt(\SF) \triangleq \pgt(\SF)^\alpha, \quad \alpha < 0.
\end{align}
where $\alpha$ is a hyperparameter that controls how heavily we weight each state.
If our approximation $\pgt$ is exact, we can choose $\alpha = -1$ and recover the exact IS procedure described above.
If $\alpha = 0$, then this skew step has no effect.
By choosing intermediate values of $\alpha$, we trade off the reliability of our estimate $\pgt(\St)$ with the speed at which we want to increase the goal distribution entropy.

\paragraph{Variance Reduction}
As described, this procedure relies on IS, which can have high variance, particularly if $\pgt(\St) \approx 0$.
We therefore choose a class of generative models where the probabilities are prevented from collapsing to zero, as we will describe in \autoref{sec:train-policy} where we provide generative model details.
To further reduce the variance, we train $\pgtt$ with sampling importance resampling (SIR)~\citep{rubin1988using} rather than IS.
Rather than sampling from $\pet$ and weighting the update from each sample by $\wt$, SIR explicitly defines a skewed empirical distribution as
\begin{align}\label{eq:pskew-defn}
    \pskewedt(\st) \triangleq \frac{1}{Z_\alpha} \wt(\st) \delta(\st \in \{\st_n\}_{n=1}^{N})
    \\\nonumber
    Z_\alpha = \sum_{n=1}^N \wt(\st_n),\ \st_n \overset{\text{iid}}{\sim} \pstatet,
\end{align}
where $\delta$ is the indicator function and $Z_\alpha$ is the normalizing coefficient.
We note that computing $Z_\alpha$ adds little computational overhead, since all of the weights already need to be computed.
We then fit the generative model at the next iteration $\pgtt$ to $\pskewedt$ using standard MLE.
We found that using SIR resulted in significantly lower variance than IS.
See \autoref{sec:analysis-variance} for this comparision.

\paragraph{Goal Sampling Alternative}
Because $\pgtt \approx \pskewedt$, at iteration $t+1$, one can sample goals from either $\pgtt$ or $\pskewedt$.
Sampling goals from $\pskewedt$ may be preferred if sampling from the learned generative model $\pgtt$ is computationally or otherwise challenging.
In either case, one still needs to train the generative model $\pgt$ to create $\pskewedt$.
In our experiments, we found that both methods perform well.

\paragraph{Summary}
Overall, \METHOD collects states from the environment and resamples each state in proportion to \autoref{eq:weight-defn} so that low-density states are resampled more often.
\METHOD is shown in \Figref{fig:main-fig} and summarized in Algorithm \ref{alg:method}.
We now discuss conditions under which \METHOD converges to the uniform distribution.

\vspace{0.1in}
\begin{algorithm}
   	\fcaption{\METHOD}
   	\label{alg:method}
   	\begin{algorithmic}[1]
   	\FOR{Iteration $t=1, 2, ...$}
        \STATE Collect $N$ states $\{\st_n\}_{n=1}^N$ by sampling goals from $\pgt$ (or $\pskewed_{t-1}$) and running goal-conditioned policy.
        \STATE Construct skewed distribution $\pskewedt$ (\Eqref{eq:weight-defn} and \Eqref{eq:pskew-defn}).
        \STATE Fit $\pgtt$ to skewed distribution $\pskewedt$ using MLE.
   	\ENDFOR
   	\end{algorithmic}
\end{algorithm}

\subsection{\METHOD Analysis}\label{sec:analysis}
This section provides conditions under which $\pgt$ converges in the limit to the uniform distribution over the state space $\Imgs$.
We consider the case where $N \rightarrow \infty$, which allows us to study the limit behavior of the goal distribution $\pskewedt$.
Our most general result is stated as follows:
\begin{lemma}\label{lemma:general-convergence}
Let $\Imgs$ be a compact set.
Define the set of distributions $\gQ = \{p : \support(p) \subseteq \Imgs\}$.
Let $\gF: \gQ \mapsto \gQ$ be continuous with respect to the pseudometric \mbox{$\dent(p, q) \triangleq |\gH(p) - \gH(q)|$} and $\gH(\gF(p)) \geq \gH(p)$ with equality if and only if $p$ is the uniform probability distribution on $\Imgs$, denoted as $\U$.
Define the sequence of distributions $P = (p_1, p_2, \dots)$ by starting with any $p_1 \in \gQ$ and recursively defining $p_{t+1} = \gF(p_t)$.
The sequence $P$ converges to $\U$ with respect to $\dent$. In other words, \mbox{$\lim_{t \rightarrow 0} |\gH(p_t) - \gH(\U)| \rightarrow 0$}.
\end{lemma}
\begin{proof}
See Appendix Section \ref{sec:general-proof}.
\end{proof}

We will apply Lemma \ref{lemma:general-convergence} to be the map from $\pskewedt$ to $\pskewedtt$ to show that $\pskewedt$ converges to $\U$.
If we assume that the goal-conditioned policy and generative model learning procedure are well behaved
(i.e., the maps from $\pgt$ to $\pet$ and from $\pskewedt$ to $\pgtt$ are continuous),
then to apply Lemma~\ref{lemma:general-convergence}, we only need to show that \mbox{$\gH(\pskewedt) \geq \gH(\pet)$} with equality if and only if \mbox{$\pet = \U$}.
For the simple case when \mbox{$\pgt = \pet$} identically at each iteration, we prove the convergence of \METHOD true for any value of $\alpha \in [-1, 0)$ in \autoref{sec:simple-case-proof}.
However, in practice, $\pgt$ only approximates $\pet$. To address this more realistic situation, we prove the following result:
\begin{lemma}\label{lemma:pos-cov-negative-grad}
Given two distribution $\pet$ and $\pgt$ where $\pet \ll \pgt$
\footnote{
$p \ll q$ means that $p$ is absolutely continuous with respect to $q$, i.e. $p(\st) = 0 \implies q(\st) = 0$.
}
and
\begin{align}\label{eq:pos-cov}
  \Cov_{\St \sim \pet}\left[\log \pet(\St), \log \pgt(\St)\right] > 0,
\end{align}
define the $\pskewedt$ as in \Eqref{eq:pskew-defn} and take $N \rightarrow \infty$.
Let $\gH_\alpha(\alpha)$ be the entropy of $\pskewedt$ for a fixed $\alpha$.
Then there exists a constant $a < 0$ such that for all $\alpha \in [a, 0)$,
\begin{align*}
    \gH(\pskewedt) =  \gH_\alpha(\alpha) > \gH(\pet).
\end{align*}
\end{lemma}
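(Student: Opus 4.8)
The plan is to regard $\{\pskewedt\}_\alpha$ as a one-parameter exponential tilt of $\pet$, show that it reduces to $\pet$ at $\alpha = 0$, and prove that the entropy strictly increases as $\alpha$ decreases through $0$ by differentiating $\gH_\alpha$ at the origin. First I would identify the $N \to \infty$ limit of the SIR construction in \eqref{eq:pskew-defn}: since the $\st_n$ are drawn iid from $\pet$ and reweighted by $\wt = \pgt^\alpha$, the law of large numbers gives $Z_\alpha \to \E_{\St \sim \pet}[\pgt(\St)^\alpha]$ and, tested against bounded functions, the empirical skewed measure converges to the density
\begin{align*}
    \pskewedt(\st) = \frac{1}{Z_\alpha}\pet(\st)\,\pgt(\st)^\alpha, \qquad Z_\alpha = \E_{\St \sim \pet}[\pgt(\St)^\alpha].
\end{align*}
The hypothesis $\pet \ll \pgt$ guarantees $\pgt(\st)^\alpha < \infty$ wherever $\pet(\st) > 0$ (recall $\alpha < 0$), so this density is well defined; I would additionally assume mild integrability so that $Z_\alpha < \infty$ and $\log\pet, \log\pgt$ have finite second moments under $\pet$ for $\alpha$ in a neighborhood of $0$.

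Next, writing $f_\alpha(\st) \triangleq \log\pskewedt(\st) = \log\pet(\st) + \alpha\log\pgt(\st) - \log Z_\alpha$, so that $\gH_\alpha(\alpha) = -\E_{\pskewedt}[f_\alpha]$, I would compute the derivative in $\alpha$. Two facts drive the calculation: $\tfrac{d}{d\alpha}\log Z_\alpha = \E_{\pskewedt}[\log\pgt]$, so that $\partial_\alpha f_\alpha = \log\pgt - \E_{\pskewedt}[\log\pgt]$ has mean zero under $\pskewedt$, while $\partial_\alpha \pskewedt = \pskewedt\,\partial_\alpha f_\alpha$. Differentiating under the integral sign, the $\E_{\pskewedt}[\partial_\alpha f_\alpha]$ contribution vanishes and the surviving term is a covariance:
\begin{align*}
    \frac{d}{d\alpha}\gH_\alpha(\alpha) = -\Cov_{\pskewedt}\!\left[\log\pgt,\, f_\alpha\right] = -\Cov_{\pskewedt}\!\left[\log\pgt,\, \log\pet\right] - \alpha\,\Var_{\pskewedt}\!\left[\log\pgt\right],
\end{align*}
using that covariance is invariant to the additive constant $-\log Z_\alpha$ and decomposing the $\alpha\log\pgt$ term.

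Evaluating at $\alpha = 0$, where $\pskewedt = \pet$ and $\gH_\alpha(0) = \gH(\pet)$, the second term drops and the first becomes exactly the hypothesized quantity, so
\begin{align*}
    \frac{d}{d\alpha}\gH_\alpha(\alpha)\Big|_{\alpha = 0} = -\Cov_{\St \sim \pet}\!\left[\log\pet(\St),\, \log\pgt(\St)\right] < 0
\end{align*}
by \eqref{eq:pos-cov}. Since $\tfrac{d}{d\alpha}\gH_\alpha$ is continuous near $0$, a first-order Taylor expansion then produces a constant $a < 0$ with $\gH_\alpha(\alpha) > \gH_\alpha(0) = \gH(\pet)$ for every $\alpha \in [a, 0)$, which is the claim.

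The step I expect to be the main obstacle is the analytic justification of differentiating under the integral sign and interchanging the $N \to \infty$ limit with expectation: I need to dominate $\partial_\alpha(\pskewedt f_\alpha)$ uniformly for $\alpha$ in a neighborhood of $0$, which requires tail control on $\log\pgt$ and $\log\pet$ under $\pet$ (finiteness of $Z_\alpha$, of $\E_{\pet}[|\log\pgt|\,\pgt^\alpha]$, and of the relevant second moments). The absolute-continuity assumption $\pet \ll \pgt$ is precisely what keeps $\pgt^\alpha$ integrable near the zeros of $\pgt$; without it $Z_\alpha$ could diverge and the family $\pskewedt$ would be ill-defined. Everything else is routine bookkeeping once these regularity conditions are in place.
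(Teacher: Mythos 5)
Your proposal is correct and follows essentially the same route as the paper's proof: both treat $\pskewedt \propto \pet \cdot \pgt^{\alpha}$ as a one-parameter exponential tilt, show that the derivative of the entropy at $\alpha = 0$ equals $-\Cov_{\pet}[\log \pet, \log \pgt] < 0$, and conclude by continuity of the derivative; the only difference is that you differentiate the entropy directly under the integral sign, whereas the paper invokes the closed-form entropy of a one-dimensional exponential family and its expression via $A'(\alpha)$ and $A''(\alpha)$, arriving at the identical formula $-\alpha\Var_{\pskewedt}[\log \pgt] - \Cov_{\pskewedt}[\log \pet, \log \pgt]$. Your explicit treatment of the $N \to \infty$ limit of the SIR construction and of the integrability conditions needed to justify the interchange of limits is a welcome addition that the paper leaves implicit.
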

\begin{proof}
See Appendix Section \ref{sec:covariance-proof}.
\end{proof}
This lemma tells us that our generative model $\pgt$ does not need to exactly fit the sampled states.
Rather, we merely need the log densities of $\pgt$ and $\pet$ to be correlated, which we expect to happen frequently with an accurate goal-conditioned policy, since $\pet$ is the set of states seen when trying to reach goals from $\pgt$.
In this case, if we choose negative values of $\alpha$ that are small enough, then the entropy of $\pskewedt$ will be higher than that of $\pet$.
Empirically, we found that $\alpha$ values as low as $\alpha=-1$ performed well.

In summary, $\pskewedt$ converges to $\U$ under certain assumptions.
Since we train each generative model $\pgtt$ by fitting it to $\pskewedt$ with MLE, $\pgt$ will also converge to $\U$.

\section{Training Goal-Conditioned Policies with \METHOD}
\label{sec:train-policy}
Thus far, we have presented \METHOD assuming that we have access to a goal-reaching policy, allowing us to separately analyze how we can maximize $\HG$.
However, in practice we do not have access to such a policy, and this section discusses how we concurrently train a goal-reaching policy.

Maximizing $I(\SF; \G)$ can be done by simultaneously performing \METHOD and training a goal-conditioned policy to minimize $\HGS$, or, equivalently, maximize $-\HGS$.
Maximizing $-\HGS$ requires computing the density $\log p(\G \mid \SF)$, which may be difficult to compute without strong modeling assumptions.
However, for any distribution $q$, the following lower bound on $-\HGS$:
\begin{align}\nonumber
-\HGS
    &= \E_{(\G, \SF) \sim q}\left[
        \log q(\G \mid \SF)
    \right]
+ \kld{p}{q}
\\\nonumber
&
    \geq \E_{(\G, \SF) \sim q}\left[
        \log q(\G \mid \SF)
    \right],
\end{align}
where $\KL$ denotes Kullback–Leibler divergence as discussed by \citet{barber2004information}.
Thus, to minimize $\HGS$, we train a policy to maximize the reward
\begin{align}\nonumber
    r(\SF, \G) = \log q(\G \mid \SF).
\end{align}

The RL algorithm we use is reinforcement learning with imagined goals (RIG)~\citep{nair2018rig}, though in principle any goal-conditioned method could be used.
RIG is an efficient off-policy goal-conditioned method that solves vision-based RL problems in a learned latent space.
In particular, RIG fits a $\beta$-VAE~\citep{higgins2016beta} and uses it to encode observations and goals into a latent space, which it uses as the state representation.
RIG also uses the $\beta$-VAE to compute rewards, $\log q(\G \mid \SF)$.
Unlike RIG, we use the goal distribution from \METHOD to sample goals for exploration and for relabeling goals during training~\citep{andrychowicz2017her}.
Since RIG already trains a generative model over states, we reuse this $\beta$-VAE for the generative model $\pg$ of \METHOD.
To make the most use of the data, we train $\pg$ on all visited state rather than only the terminal states, which we found to work well in practice.
To prevent the estimated state likelihoods from collapsing to zero, we model the posterior of the $\beta$-VAE as a multivariate Gaussian distribution with a fixed variance and only learn the mean.
We summarize RIG and provide details for how we combine \METHOD and RIG in \autoref{sec:rig-and-full-method} and describe how we estimate the likelihoods given the $\beta$-VAE in \autoref{sec:likelihood-estimation-vae}.

\section{Related Work}\label{sec:related_work}
Many prior methods in the goal-conditioned reinforcement learning literature focus on training goal-conditioned policies and assume that a goal distribution is available to sample from during exploration~\citep{kaelbling1993goals,schaul2015uva,andrychowicz2017her,pong2018tdm}, or use a heuristic to design a non-parametric~\citep{colas2018gep,wardefarley2018discern,florensa2018selfsupervised} or parametric~\citep{pere2018unsupervised,nair2018rig} goal distribution based on previously visited states.
These methods are largely complementary to our work:
rather than proposing a better method for training goal-reaching policies, we propose a principled method for maximizing the entropy of a goal sampling distribution, $\gH(\G)$, such that these policies cover a wide range of states.

Our method learns without any task rewards, directly acquiring a policy that can be reused to reach user-specified goals.
This stands in contrast to exploration methods that modify the reward based on state visitation frequency~\citep{bellemare2016unifying,ostrovski2017count,tang2017hashtag,chentanez2005intrinsically,lopes2012exploration,stadie2016exploration,pathak2017curiosity,burda2018exploration,burda2018large,mohamed2015variational,tang2017hashtag,fu2017ex2}.
While these methods can also be used without a task reward, they provide no mechanism for distilling the knowledge gained from visiting diverse states into flexible policies that can be applied to accomplish new goals at test-time: their policies visit novel states, and they quickly forget about them as other states become more novel.
Similarly, methods that provably maximize state entropy without using goal-directed exploration~\citep{hazan2019provably} or methods that define new rewards to capture measures of intrinsic motivation~\citep{mohamed2015variational} and reachability~\citep{savinov2018episodic} do not produce reusable policies.

Other prior methods extract reusable skills in the form of latent-variable-conditioned policies, where
latent variables are interpreted as options~\citep{sutton1999between} or abstract skills~\citep{hausman2018skillembedding,gupta2018structuredexploration,eysenbach2018diayn,gupta2018unsupervised,florensa2017stochastic}.
The resulting skills are diverse, but have no grounded interpretation, while \METHOD policies can be used immediately after unsupervised training to reach diverse user-specified goals.

Some prior methods propose to choose goals based on heuristics such as learning progress~\citep{baranes2012, veeriah2018many, colas2018curious}, how off-policy the goal is~\citep{nachum2018hiro}, level of difficulty~\citep{held2018goalgan}, or likelihood ranking~\citep{zhao2019rankweight}.
In contrast, our approach provides a principled framework for optimizing a concrete and well-motivated exploration objective, can provably maximize this objective under regularity assumptions, and empirically outperforms many of these prior work (see \autoref{sec:experiments}).

\section{Experiments}\label{sec:experiments}
Our experiments study the following questions:
\textbf{(1)} Does \METHOD empirically result in a goal distribution with increasing entropy?
\textbf{(2)} Does \METHOD improve exploration for goal-conditioned RL?
\textbf{(3)} How does \METHOD compare to prior work on choosing goals for vision-based, goal-conditioned RL?
\textbf{(4)} Can \METHOD be applied to a real-world, vision-based robot task?

\begin{figure}[t]
    \includegraphics[width=.49\linewidth ]{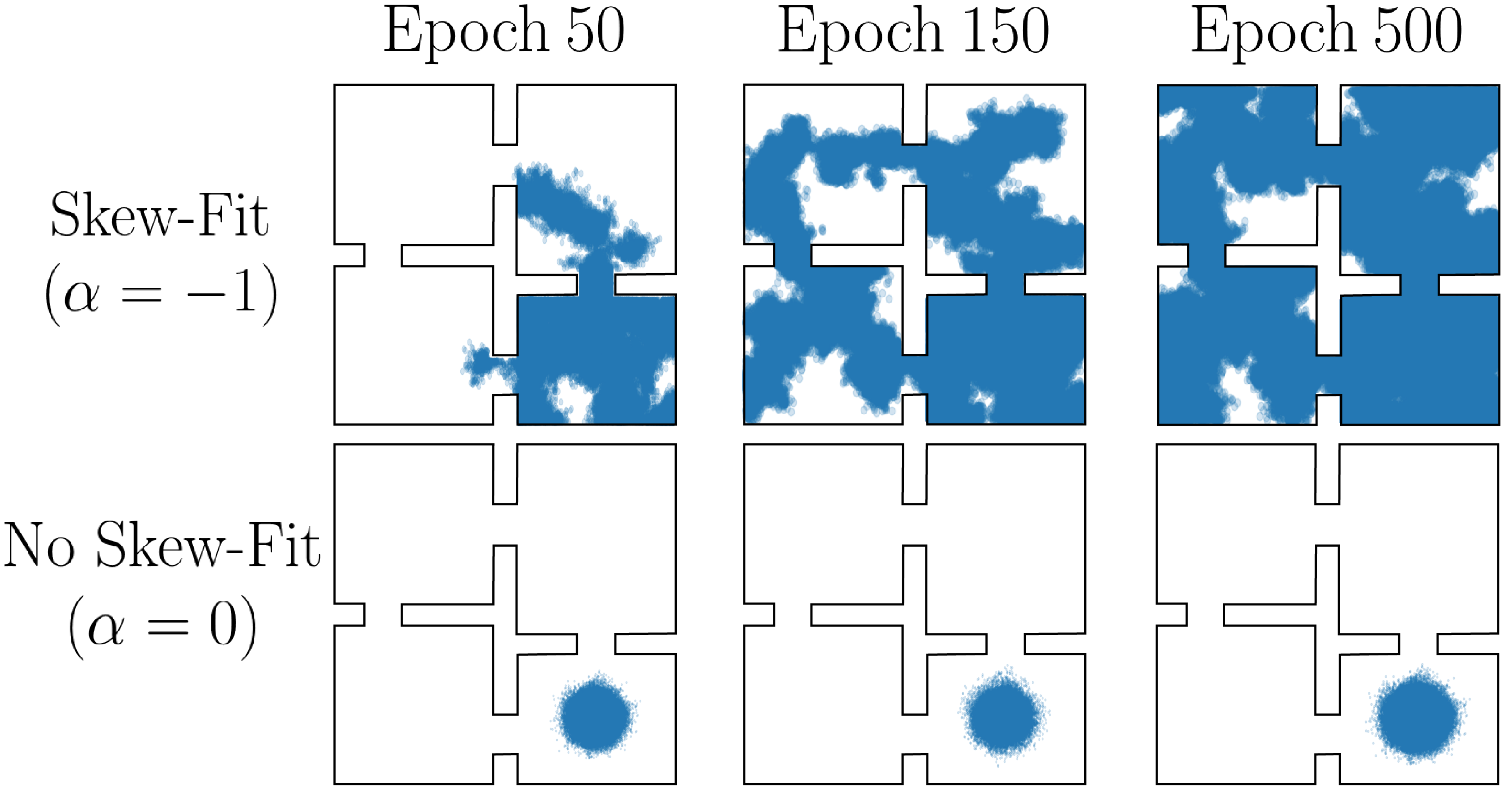}
    \includegraphics[width=.49\linewidth ]{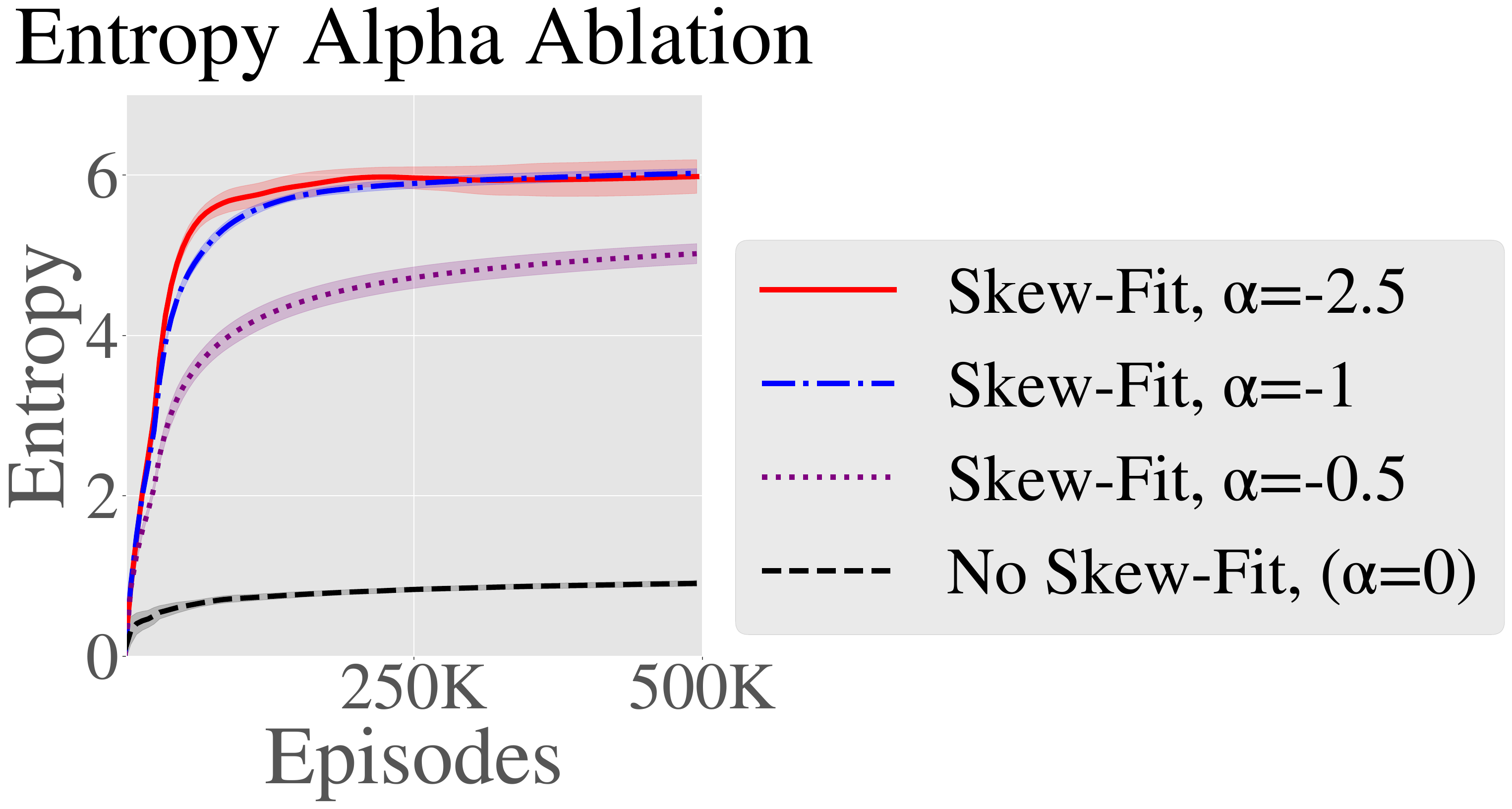}
    \fcaption{
    Illustrative example of \METHOD on a 2D navigation task. (Left) Visited state plot for \METHOD with $\alpha = -1$ and uniform sampling, which corresponds to $\alpha = 0$. (Right) The entropy of the goal distribution per iteration, mean and standard deviation for 9 seeds. Entropy is calculated via discretization onto an 11x11 grid. \METHOD steadily increases the state entropy, reaching full coverage over the state space.
    }
    \label{fig:2d-sl}
\end{figure}

\paragraph{Does \METHOD Maximize Entropy?}
To see the effects of \METHOD on goal distribution entropy in isolation of learning a goal-reaching policy, we study an idealized example where the policy is a near-perfect goal-reaching policy.
The environment consists of four rooms~\citep{sutton1999between}.
At the beginning of an episode, the agent begins in the bottom-right room and samples a goal from the goal distribution $\pgt$.
To simulate stochasticity of the policy and environment, we add a Gaussian noise with standard deviation of $0.06$ units to this goal, where the entire environment is $11 \times 11$ units.
The policy reaches the state that is closest to this noisy goal and inside the rooms, giving us a state sample $\st_n$ for training $\pgt$.
Due to the relatively small noise, the agent cannot rely on this stochasticity to explore the different rooms and must instead learn to set goals that are progressively farther and farther from the initial state.
We compare multiple values of $\alpha$, where $\alpha=0$ corresponds to not using \METHOD.
The $\beta$-VAE hyperparameters used to train $\pgt$ are given in \autoref{sec:2d-details}.
As seen in \Figref{fig:2d-sl}, sampling uniformly from previous experience ($\alpha = 0$) to set goals results in a policy that primarily sets goal near the initial state distribution.
In contrast, \METHOD results in quickly learning a high entropy, near-uniform distribution over the state space.

\begin{figure}[t]
    \centering
    \includegraphics[width=0.32\linewidth]{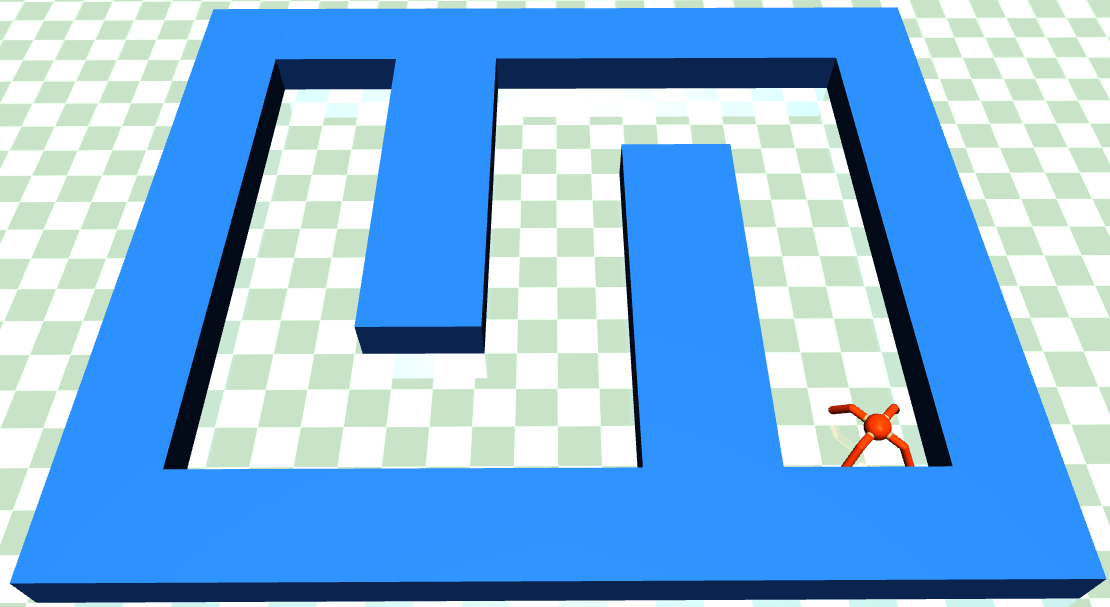}
    \includegraphics[width=0.65\linewidth]{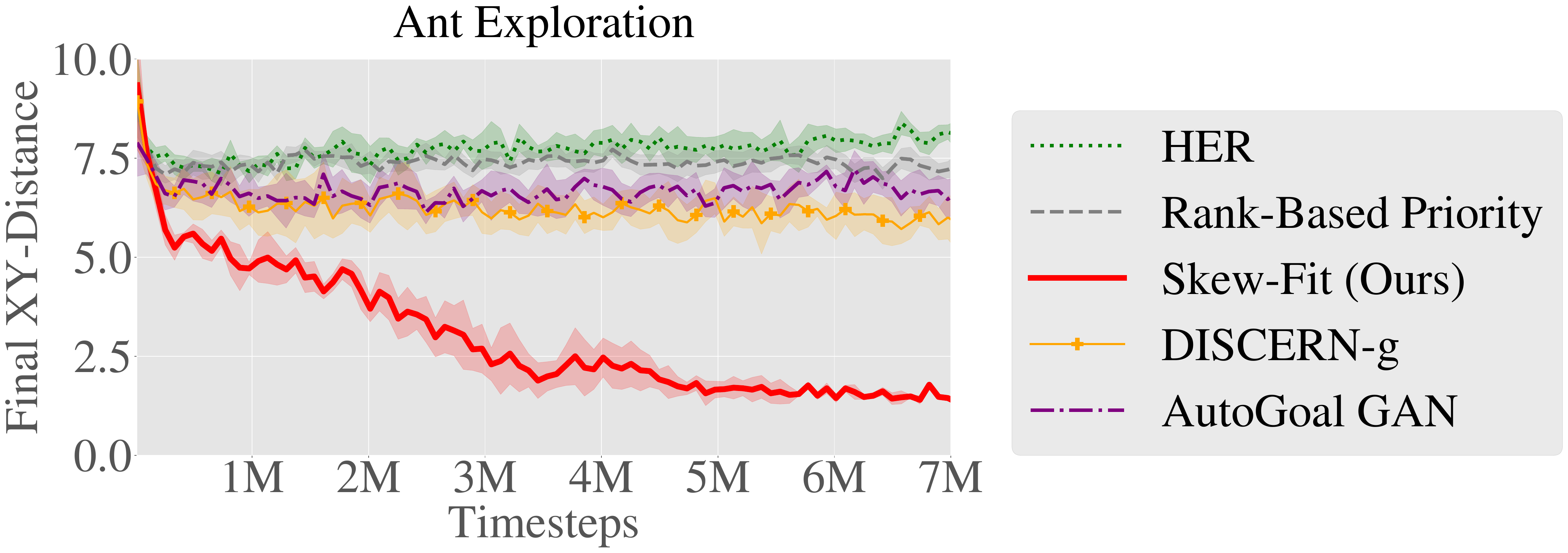}
    \fcaption{
    (Left) Ant navigation environment.
    (Right) Evaluation on reaching target XY position.
    We show the mean and standard deviation of 6 seeds.
    \METHOD significantly outperforms prior methods on this exploration task.
    }
    \label{fig:antmaze}
\end{figure}

\paragraph{Exploration with \METHOD}
We next evaluate \METHOD while concurrently learning a goal-conditioned policy on a task with state inputs, which enables us study exploration performance independently of the challenges with image observations.
We evaluate on a task that requires training a simulated quadruped ``ant'' robot to navigate to different XY positions in a labyrinth,
as shown in \Figref{fig:antmaze}.
The reward is the negative distance to the goal XY-position, and additional environment details are provided in \autoref{sec:environment-details}.
This task presents a challenge for goal-directed exploration:
the set of valid goals is unknown due to the walls, and
random actions do not result in exploring locations far from the start.
Thus, \METHOD must set goals that meaningfully explore the space while simultaneously learning to reach those goals.

We use this domain to compare \METHOD to a number of existing goal-sampling methods.
We compare to the relabeling scheme described in the hindsight experience replay (labeled \textbf{HER}).
We compare to curiosity-driven prioritization (\textbf{Ranked-Based Priority})~\citep{zhao2019maximum}, a variant of HER that samples goals for relabeling based on their ranked likelihoods.
\citet{held2018goalgan} samples goals from a GAN based on the difficulty of reaching the goal.
We compare against this method by replacing $\pg$ with the GAN and label it \textbf{AutoGoal GAN}.
We also compare to the non-parametric goal proposal mechanism proposed by \cite{wardefarley2018discern}, which we label \textbf{DISCERN-g}.
Lastly, to demonstrate the difficulty of the exploration challenge in these domains, we compare to \textbf{\#-Exploration}~\citep{tang2017hashtag}, an exploration method that assigns bonus rewards based on the novelty of new states.
We train the goal-conditioned policy for each method using soft actor critic (SAC)~\citep{haarnoja2018sacapp}.
Implementation details of SAC and the prior works are given in  \autoref{sec:prior-work-implementation}.

We see in \Figref{fig:antmaze} that \METHOD is the only method that makes significant progress on this challenging labyrinth locomotion task.
The prior methods on goal-sampling primarily set goals close to the start location, while the extrinsic exploration reward in \#-Exploration dominated the goal-reaching reward.
These results demonstrate that \METHOD accelerates exploration by setting diverse goals in tasks with unknown goal spaces.

\paragraph{Vision-Based Continuous Control Tasks}
\begin{figure}[t]
    \centering
    \includegraphics[width=0.8\linewidth]{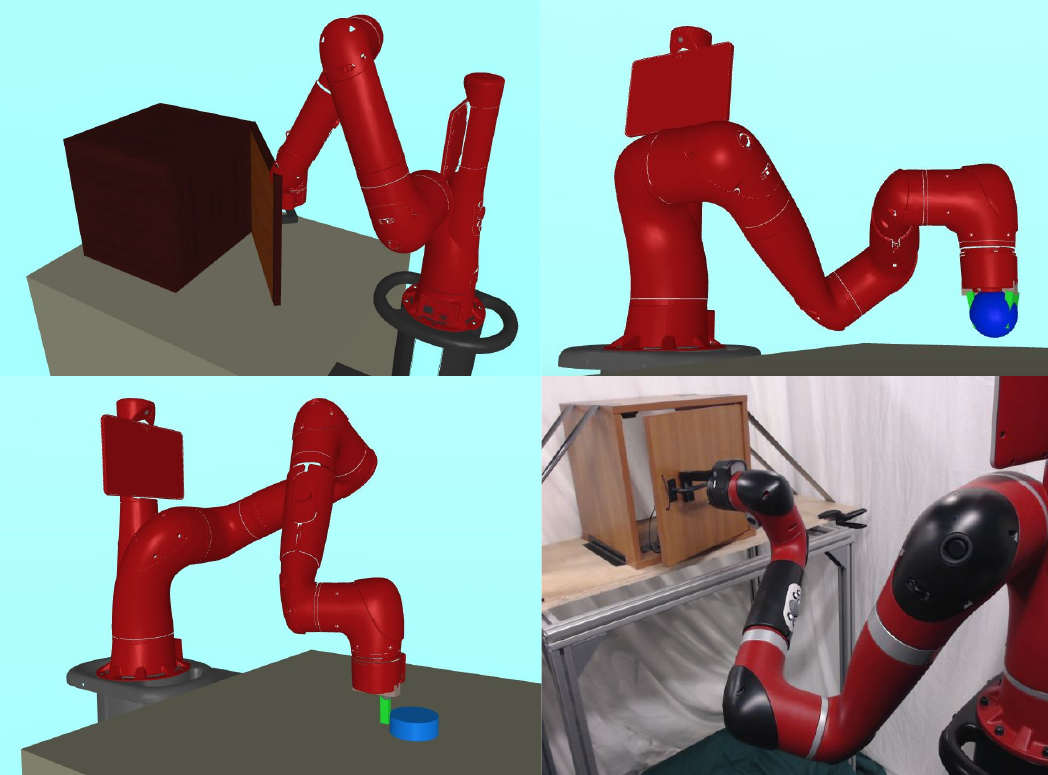}
    \fcaption{We evaluate on these continuous control tasks, from left to right:
    \textit{Visual Door}, a door opening task;
    \textit{Visual Pickup}, a picking task;
    \textit{Visual Pusher}, a pushing task;
    and \textit{Real World Visual Door}, a real world door opening task. All tasks are solved from images and without any task-specific reward. See Appendix \ref{sec:environment-details} for details.}
    \label{fig:env-pics}
\end{figure}

We now evaluate \METHOD on a variety of image-based continuous control tasks, where the policy must control a robot arm using only image observations, there is no state-based or task-specific reward, and \METHOD must directly set image goals.
We test our method on three different image-based simulated continuous control tasks released by the authors of RIG~\citep{nair2018rig}: \textit{Visual Door}, \textit{Visual Pusher}, and \textit{Visual Pickup}.
These environments contain a robot that can open a door, push a puck, and lift up a ball to different configurations, respectively.
To our knowledge, these are the only goal-conditioned, vision-based continuous control environments that are publicly available and experimentally evaluated in prior work, making them a good point of comparison.
See \autoref{fig:env-pics} for visuals and \autoref{sec:implementation-details} for environment details.
The policies are trained in a completely unsupervised manner, without access to any prior information about the image-space or any pre-defined goal-sampling distribution.
To evaluate their performance, we sample goal images from a uniform distribution over valid states and report the agent's final distance to the corresponding simulator states (e.g., distance of the object to the target object location), but the agent never has access to this true uniform distribution nor the ground-truth state information during training.
While this evaluation method is only practical in simulation, it provides us with a quantitative measure of a policy's ability to reach a broad coverage of goals in a vision-based setting.

We compare \METHOD to a number of existing methods on this domain.
First, we compare to the methods described in the previous experiment (HER, Rank-Based Priority, \#-Exploration, Autogoal GAN, and \mbox{DISCERN-g}).
These methods that we compare to were developed in non-vision, state-based environments.
To ensure a fair comparison across methods, we combine these prior methods with a policy trained using RIG.
We additionally compare to \citet{hazan2019provably}, an exploration method that assigns bonus rewards based on the likelihood of a state (labeled \textbf{Hazan et al.}).
Next, we compare to \textbf{RIG} without \METHOD.
Lastly, we compare to \textbf{DISCERN}~\citep{wardefarley2018discern}, a vision-based method which uses a non-parametric clustering approach to sample goals and an image discriminator to compute rewards.

\begin{figure}
    \centering
     \begin{subfigure}[t]{.49\linewidth}
    \centering
          \includegraphics[width=\linewidth]{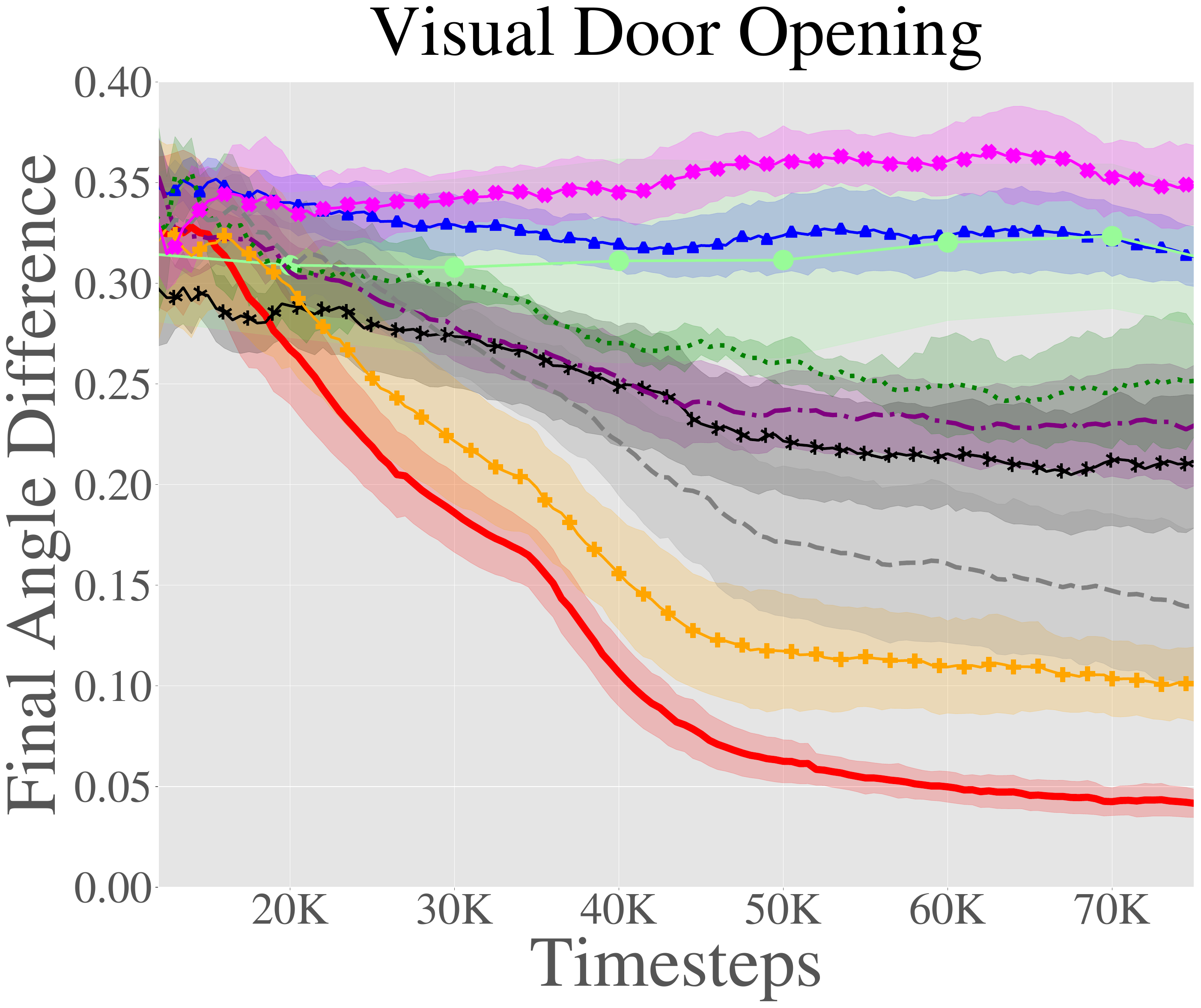}
  \end{subfigure}
  \hfill
  \begin{subfigure}[t]{.49\linewidth}
    \centering
          \includegraphics[width=\linewidth]{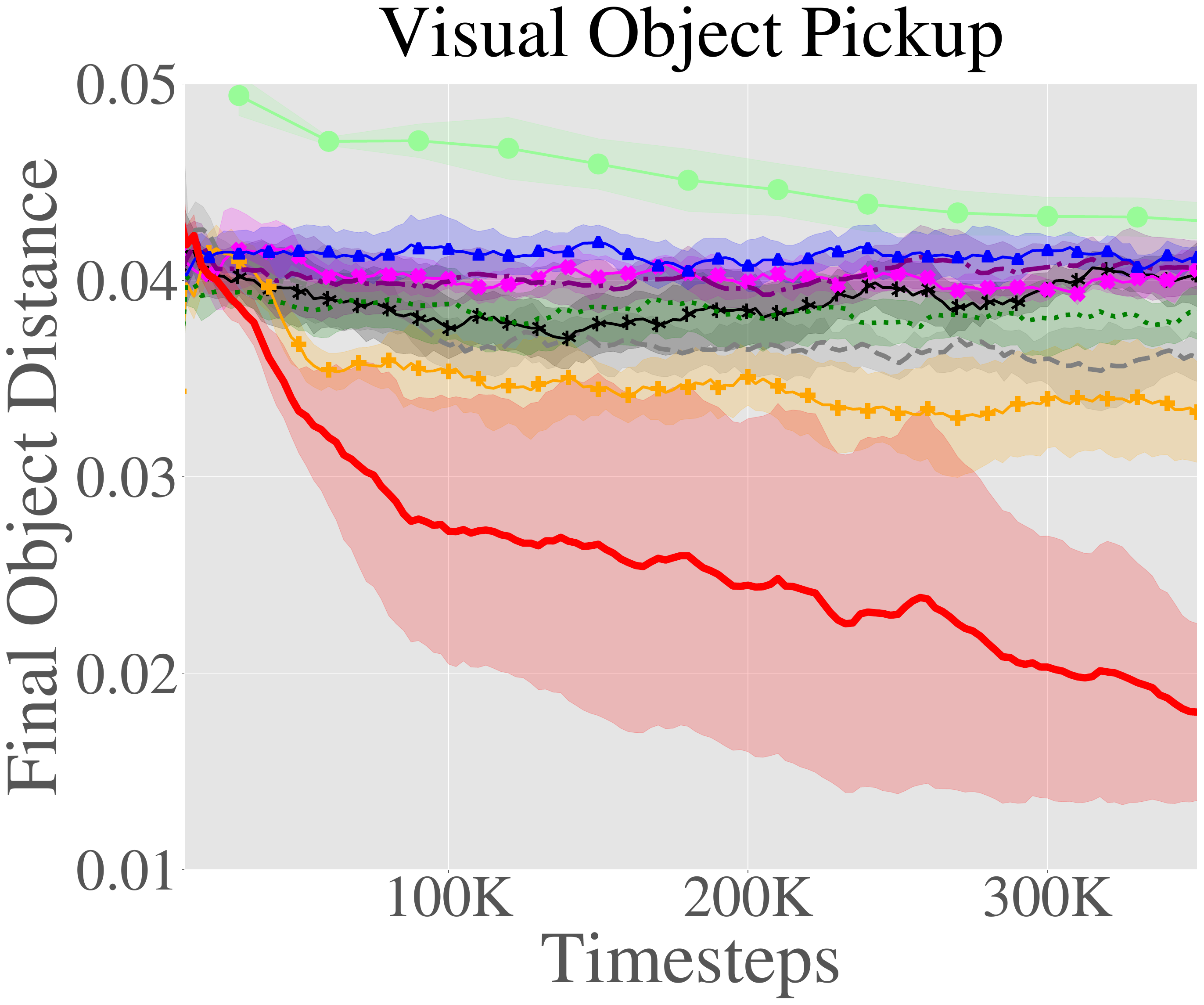}
  \end{subfigure}

  \medskip

  \begin{subfigure}[t]{.49\linewidth}
    \centering
          \includegraphics[width=\linewidth]{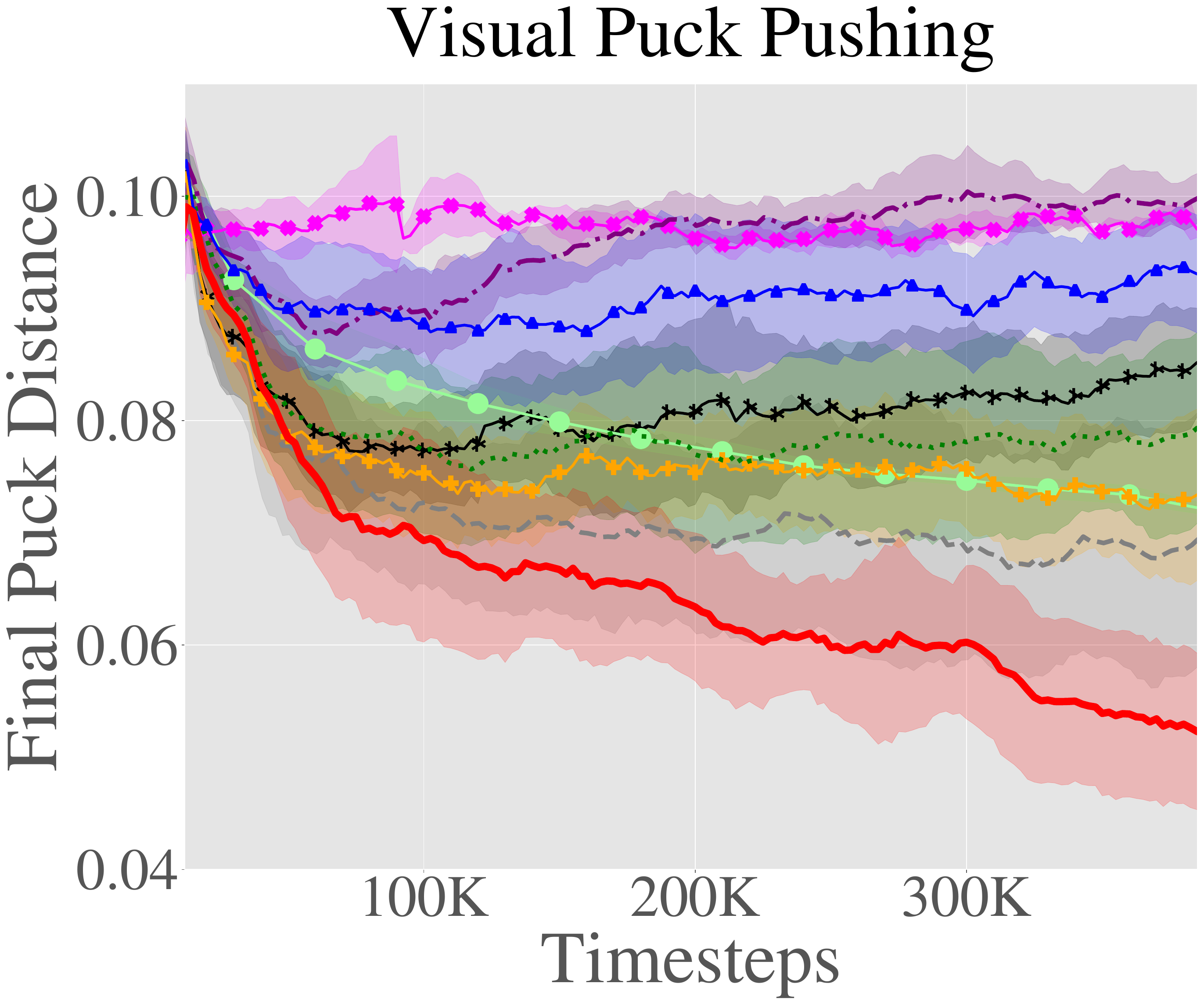}
  \end{subfigure}
  \hfill
  \begin{subfigure}[t]{.48\linewidth}
    \centering
    \raisebox{0.16in}{
          \includegraphics[width=\linewidth]{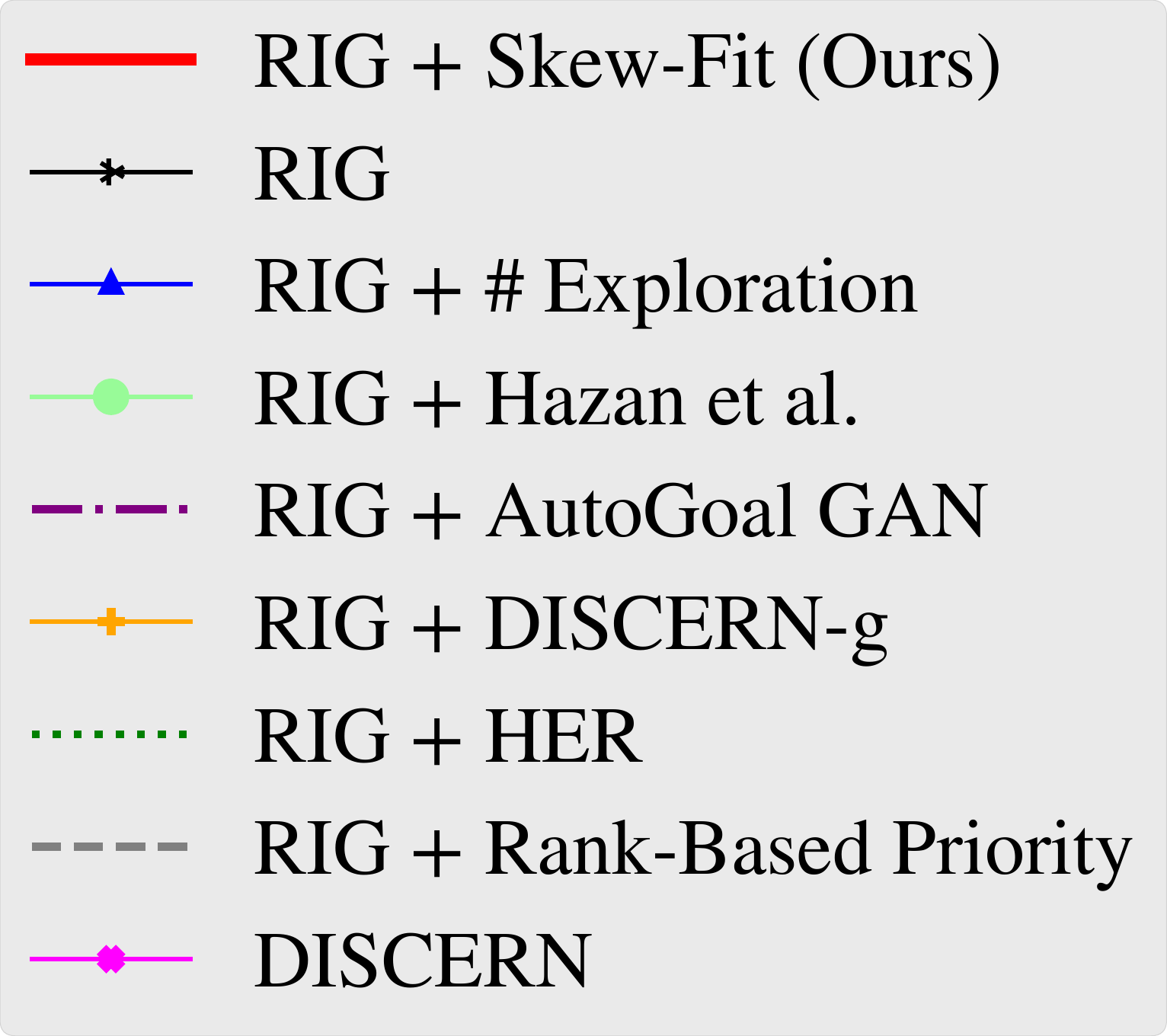}
    }
  \end{subfigure}
    \fcaption{
        Learning curves for simulated continuous control tasks.
        Lower is better.
        We show the mean and standard deviation of 6 seeds and smooth temporally across 50 epochs within each seed.
        \METHOD consistently outperforms RIG and various prior methods.
        See text for description of each method.
    }
    \label{fig:sim-results}
\end{figure}

We see in \Figref{fig:sim-results} that Skew-Fit significantly outperforms prior methods both in terms of task performance and sample complexity.
The most common failure mode for prior methods is that the goal distributions collapse, resulting in the agent learning to reach only a fraction of the state space, as shown in \autoref{fig:offline-sk-real}.
For comparison, additional samples of $\pg$ when trained with and without \METHOD are shown in \autoref{sec:vae-dump}.
Those images show that without \METHOD, $\pg$ produces a small, non-diverse distribution for each environment: the object is in the same place for pickup, the puck is often in the starting position for pushing, and the door is always closed.
In contrast, \METHOD proposes goals where the object is in the air and on the ground, where the puck positions are varied, and the door angle changes.

We can see the effect of these goal choices by visualizing more example rollouts for RIG and \METHOD.
These visuals, shown in \Figref{fig:example_rollouts} in \autoref{sec:vae-dump}, show that RIG only learns to reach states close to the initial position, while \METHOD learns to reach the entire state space.
For a quantitative comparison, \Figref{fig:exploration_pickups} shows the cumulative total exploration pickups for each method.
From the graph, we see that many methods have a near-constant rate of object lifts throughout all of training.
\METHOD is the only method that significantly increases the rate at which the policy picks up the object during exploration, suggesting that only \METHOD sets goals that encourage the policy to interact with the object.

\begin{figure}[t]
\centering
  \includegraphics[width=\linewidth]{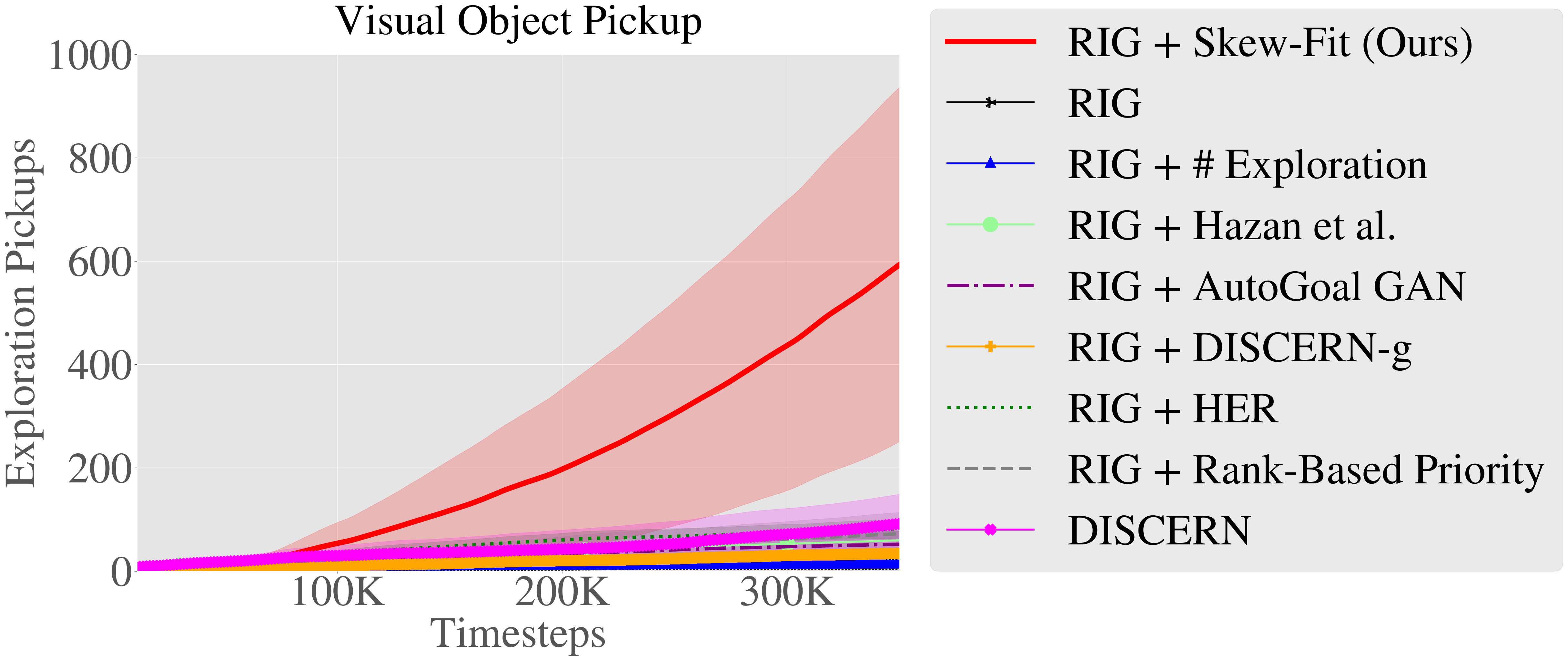}
  \fcaption{
Cumulative total pickups during exploration for each method.
Prior methods fail to pay attention to the object: the rate of pickups hardly increases past the first 100 thousand timesteps.
In contrast, after seeing the object picked up a few times, \METHOD practices picking up the object more often by sampling the appropriate exploration goals.
}
  \label{fig:exploration_pickups}
\end{figure}

\paragraph{Real-World Vision-Based Robotic Manipulation}
We also demonstrate that \METHOD scales well to the real world with a door opening task, \textit{Real World Visual Door}, as shown in \Figref{fig:env-pics}.
While a number of prior works have studied RL-based learning of door opening~\cite{kalakrishnan2011learning,chebotar2017path}, we demonstrate the first method for autonomous learning of door opening without a user-provided, task-specific reward function.
As in simulation, we do not provide any goals to the agent and simply let it interact with the door, without any human guidance or reward signal.
We train two agents using RIG and RIG with \METHOD.
Every seven and a half minutes of interaction time, we evaluate on $5$ goals and plot the cumulative successes for each method.
Unlike in simulation, we cannot easily measure the difference between the policy's achieved and desired door angle.
Instead, we visually denote a binary success/failure for each goal based on whether the last state in the trajectory achieves the target angle.
As \Figref{fig:real-results} shows, standard RIG only starts to open the door after five hours of training.
In contrast, \METHOD learns to occasionally open the door after three hours of training and achieves a near-perfect success rate after five and a half hours of interaction.
\autoref{fig:real-results} also shows examples of successful trajectories from the \METHOD policy, where we see that the policy can reach a variety of user-specified goals.
These results demonstrate that \METHOD is a promising technique for solving real world tasks without any human-provided reward function.
Videos of \METHOD solving this task and the simulated tasks can be viewed on
our website.
\footnote{https://sites.google.com/view/skew-fit}

\begin{figure}[t]
  \centering
  \includegraphics[width=0.75\linewidth]{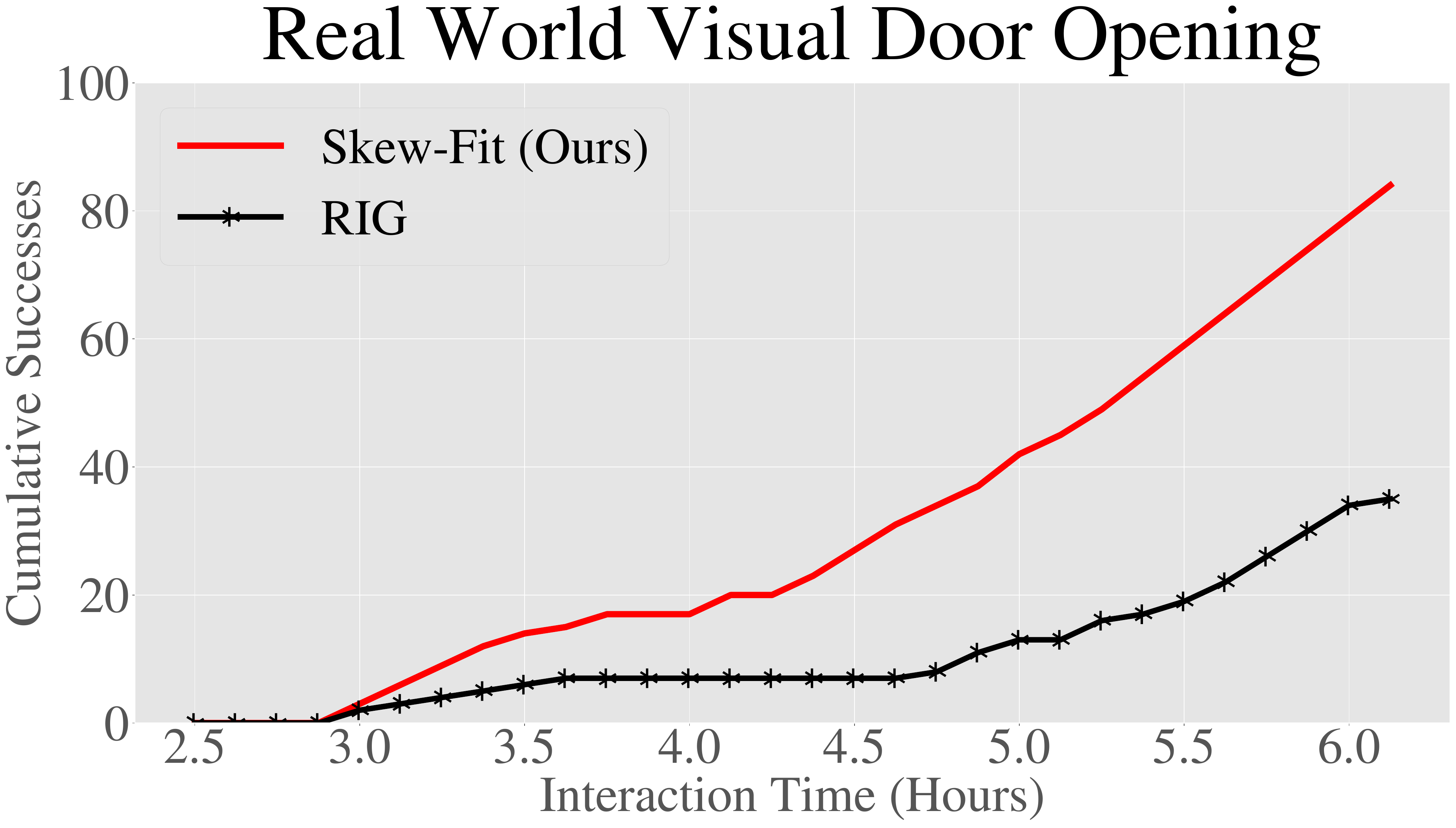}
    \begin{subfigure}[b]{0.49\textwidth}
        \center
        \hspace{-.2cm}
        $\SF_1$ \hspace{4.3cm} $\SF_{100}$ \hspace{.7cm} $\G$

        \includegraphics[width=0.14\linewidth]{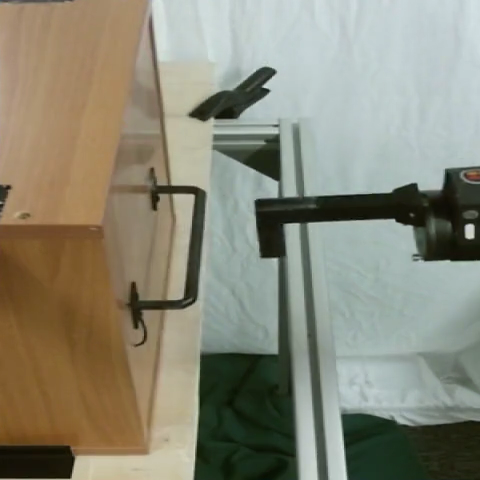}
        \includegraphics[width=0.14\linewidth]{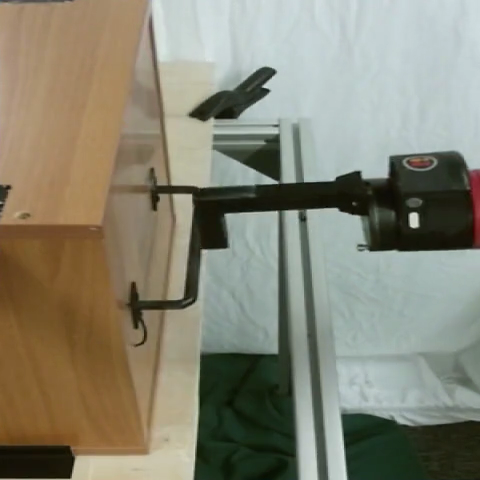}
        \includegraphics[width=0.14\linewidth]{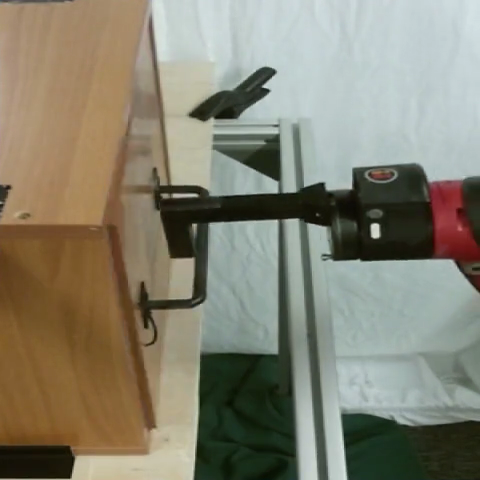}
        \includegraphics[width=0.14\linewidth]{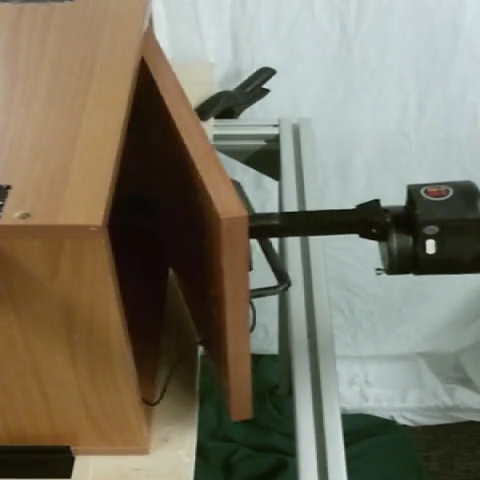}
        \includegraphics[width=0.14\linewidth]{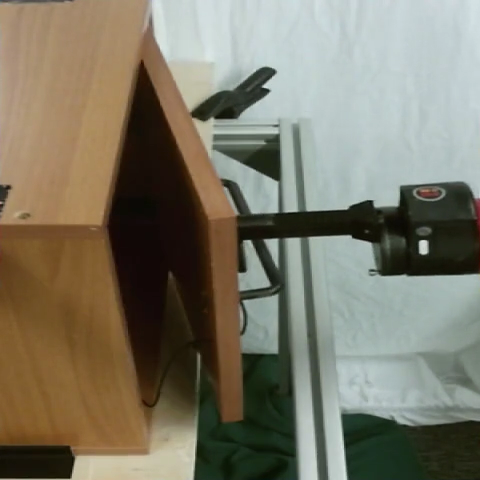}
        \hspace{0.01\linewidth}
        \includegraphics[width=0.14\linewidth]{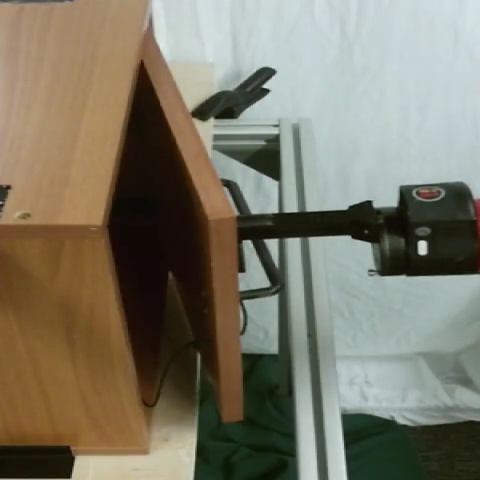} 
    \end{subfigure}
    \begin{subfigure}[b]{0.49\textwidth}
        \center

        \includegraphics[width=0.14\linewidth]{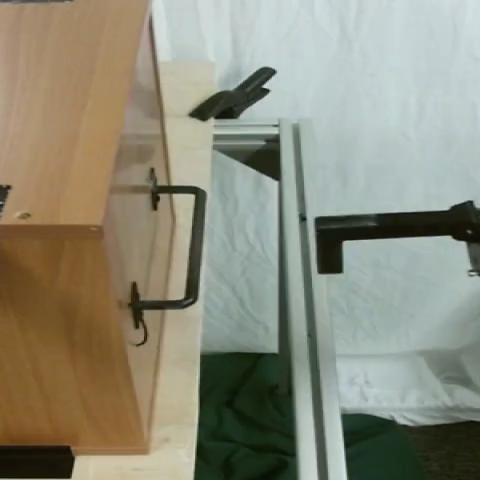}
        \includegraphics[width=0.14\linewidth]{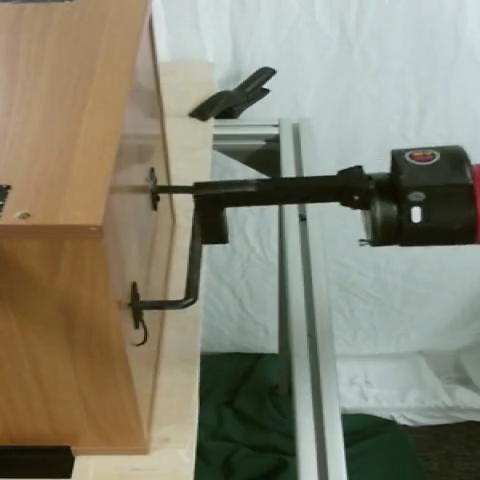}
        \includegraphics[width=0.14\linewidth]{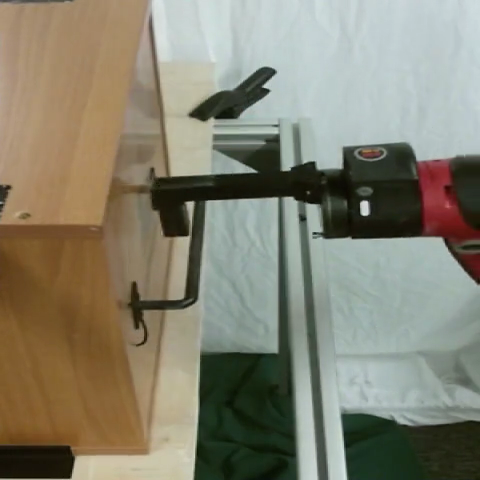}
        \includegraphics[width=0.14\linewidth]{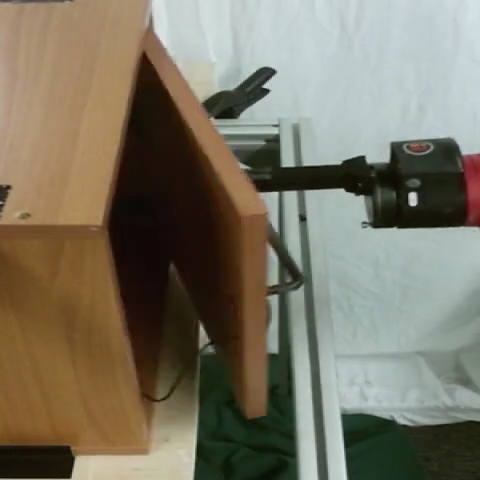}
        \includegraphics[width=0.14\linewidth]{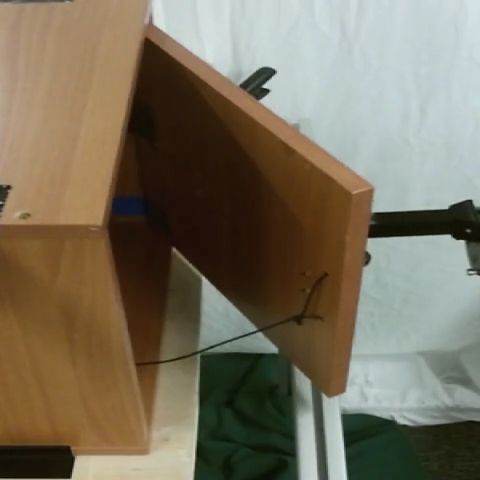}
        \hspace{0.01\linewidth}
        \includegraphics[width=0.14\linewidth]{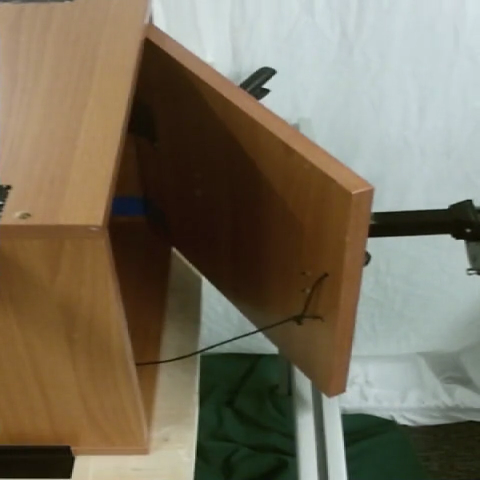}
    \end{subfigure}
  \fcaption{
  (Top) Learning curve for Real World Visual Door.
  \METHOD results in considerable sample efficiency gains over RIG on this real-world task.
  (Bottom)
  Each row shows the \METHOD policy starting from state $\SF_1$ and reaching state $\SF_{100}$ while pursuing goal $\G$.
  Despite being trained from only images without any user-provided goals during training, the \METHOD policy achieves the goal image provided at test-time, successfully opening the door.
  }
  \label{fig:real-results}
\end{figure}

\paragraph{Additional Experiments}
To study the sensitivity of \METHOD to the hyperparameter $\alpha$, we sweep $\alpha$ across the values $[-1, -0.75, -0.5, -0.25, 0]$ on the simulated image-based tasks.
The results are in \autoref{sec:add-exps} and demonstrate that \METHOD works across a large range of values for $\alpha$, and $\alpha=-1$ consistently outperform $\alpha=0$ (i.e. outperforms no \METHOD).
Additionally, \autoref{sec:implementation-details} provides a complete description our method hyperparameters, including network architecture and RL algorithm hyperparameters.

\section{Conclusion}\label{sec:conclusion}
We presented a formal objective for self-supervised goal-directed exploration, allowing researchers to quantify and compare progress when designing algorithms that enable agents to autonomously learn.
We also presented \METHOD, an algorithm for training a generative model to approximate a uniform distribution over an initially unknown set of valid states, using data obtained via goal-conditioned reinforcement learning, and our theoretical analysis gives conditions under which \METHOD converges to the uniform distribution.
When such a model is used to choose goals for exploration and to relabeling goals for training, the resulting method results in much better coverage of the state space, enabling our method to explore effectively.
Our experiments show that when we concurrently train a goal-reaching policy using self-generated goals, \METHOD produces quantifiable improvements on simulated robotic manipulation tasks, and can be used to learn a door opening skill to reach a $95\%$ success rate directly on a real-world robot, without any human-provided reward supervision.

\section{Acknowledgement}
This research was supported by Berkeley DeepDrive, Huawei, ARL DCIST CRA W911NF-17-2-0181, NSF IIS-1651843, and the Office of Naval Research, as well as Amazon, Google, and NVIDIA.
We thank Aviral Kumar, Carlos Florensa, Aurick Zhou, Nilesh Tripuraneni, Vickie Ye, Dibya Ghosh, Coline Devin, Rowan McAllister, John D. Co-Reyes, various members of the Berkeley Robotic AI \& Learning (RAIL) lab, and anonymous reviewers for their insightful discussions and feedback.

{\small
\bibliographystyle{corlabbrvnat}
\bibliography{main.bib}

\begin{thebibliography}{45}
\providecommand{\natexlab}[1]{#1}
\providecommand{\url}[1]{\texttt{#1}}
\expandafter\ifx\csname urlstyle\endcsname\relax
  \providecommand{\doi}[1]{doi: #1}\else
  \providecommand{\doi}{doi: \begingroup \urlstyle{rm}\Url}\fi

\bibitem[Andrychowicz et~al.(2017)Andrychowicz, Wolski, Ray, Schneider, Fong,
  Welinder, Mcgrew, Tobin, Abbeel, and Zaremba]{andrychowicz2017her}
Andrychowicz, M., Wolski, F., Ray, A., Schneider, J., Fong, R., Welinder, P.,
  Mcgrew, B., Tobin, J., Abbeel, P., and Zaremba, W.
\newblock {Hindsight Experience Replay}.
\newblock In \emph{Advances in Neural Information Processing Systems (NIPS)},
  2017.

\bibitem[Baranes \& Oudeyer(2012)Baranes and Oudeyer]{baranes2012}
Baranes, A. and Oudeyer, P.-Y.
\newblock {Active Learning of Inverse Models with Intrinsically Motivated Goal
  Exploration in Robots}.
\newblock \emph{Robotics and Autonomous Systems}, 61\penalty0 (1):\penalty0
  49--73, 2012.
\newblock \doi{10.1016/j.robot.2012.05.008}.

\bibitem[Barber \& Agakov(2004)Barber and Agakov]{barber2004information}
Barber, D. and Agakov, F.~V.
\newblock Information maximization in noisy channels: A variational approach.
\newblock In \emph{Advances in Neural Information Processing Systems}, pp.\
  201--208, 2004.

\bibitem[Bellemare et~al.(2016)Bellemare, Srinivasan, Ostrovski, Schaul,
  Saxton, and Munos]{bellemare2016unifying}
Bellemare, M., Srinivasan, S., Ostrovski, G., Schaul, T., Saxton, D., and
  Munos, R.
\newblock {Unifying count-based exploration and intrinsic motivation}.
\newblock In \emph{Advances in Neural Information Processing Systems (NIPS)},
  pp.\  1471--1479, 2016.

\bibitem[Burda et~al.(2018)Burda, Edwards, Storkey, and
  Klimov]{burda2018exploration}
Burda, Y., Edwards, H., Storkey, A., and Klimov, O.
\newblock Exploration by random network distillation.
\newblock \emph{arXiv preprint arXiv:1810.12894}, 2018.

\bibitem[Burda et~al.(2019)Burda, Edwards, Pathak, Storkey, Darrell, and
  Efros]{burda2018large}
Burda, Y., Edwards, H., Pathak, D., Storkey, A., Darrell, T., and Efros, A.~A.
\newblock Large-scale study of curiosity-driven learning.
\newblock In \emph{International Conference on Learning Representations
  (ICLR)}, 2019.

\bibitem[Chebotar et~al.(2017)Chebotar, Kalakrishnan, Yahya, Li, Schaal, and
  Levine]{chebotar2017path}
Chebotar, Y., Kalakrishnan, M., Yahya, A., Li, A., Schaal, S., and Levine, S.
\newblock Path integral guided policy search.
\newblock In \emph{2017 IEEE International Conference on Robotics and
  Automation (ICRA)}, pp.\  3381--3388. IEEE, 2017.

\bibitem[Chentanez et~al.(2005)Chentanez, Barto, and
  Singh]{chentanez2005intrinsically}
Chentanez, N., Barto, A.~G., and Singh, S.~P.
\newblock Intrinsically motivated reinforcement learning.
\newblock In \emph{Advances in neural information processing systems}, pp.\
  1281--1288, 2005.

\bibitem[Colas et~al.(2018{\natexlab{a}})Colas, Fournier, Sigaud, and
  Oudeyer]{colas2018curious}
Colas, C., Fournier, P., Sigaud, O., and Oudeyer, P.
\newblock {CURIOUS:} intrinsically motivated multi-task, multi-goal
  reinforcement learning.
\newblock \emph{CoRR}, abs/1810.06284, 2018{\natexlab{a}}.

\bibitem[Colas et~al.(2018{\natexlab{b}})Colas, Sigaud, and
  Oudeyer]{colas2018gep}
Colas, C., Sigaud, O., and Oudeyer, P.-Y.
\newblock Gep-pg: Decoupling exploration and exploitation in deep reinforcement
  learning algorithms.
\newblock \emph{International Conference on Machine Learning (ICML)},
  2018{\natexlab{b}}.

\bibitem[Eysenbach et~al.(2019)Eysenbach, Gupta, Ibarz, and
  Levine]{eysenbach2018diayn}
Eysenbach, B., Gupta, A., Ibarz, J., and Levine, S.
\newblock {Diversity is All You Need: Learning Skills without a Reward
  Function}.
\newblock In \emph{International Conference on Learning Representations
  (ICLR)}, 2019.

\bibitem[Florensa et~al.(2017)Florensa, Duan, and
  Abbeel]{florensa2017stochastic}
Florensa, C., Duan, Y., and Abbeel, P.
\newblock {Stochastic neural networks for hierarchical reinforcement learning}.
\newblock In \emph{International Conference on Learning Representations
  (ICLR)}, 2017.

\bibitem[Florensa et~al.(2018{\natexlab{a}})Florensa, Degrave, Heess,
  Springenberg, and Riedmiller]{florensa2018selfsupervised}
Florensa, C., Degrave, J., Heess, N., Springenberg, J.~T., and Riedmiller, M.
\newblock {Self-supervised Learning of Image Embedding for Continuous Control}.
\newblock In \emph{Workshop on Inference to Control at NeurIPS},
  2018{\natexlab{a}}.

\bibitem[Florensa et~al.(2018{\natexlab{b}})Florensa, Held, Geng, and
  Abbeel]{held2018goalgan}
Florensa, C., Held, D., Geng, X., and Abbeel, P.
\newblock {Automatic Goal Generation for Reinforcement Learning Agents}.
\newblock In \emph{International Conference on Machine Learning (ICML)},
  2018{\natexlab{b}}.

\bibitem[Fu et~al.(2017)Fu, Co-Reyes, and Levine]{fu2017ex2}
Fu, J., Co-Reyes, J.~D., and Levine, S.
\newblock {EX 2 : Exploration with Exemplar Models for Deep Reinforcement
  Learning}.
\newblock In \emph{Advances in Neural Information Processing Systems (NIPS)},
  2017.

\bibitem[Fujimoto et~al.(2018)Fujimoto, van Hoof, and Meger]{fujimoto2018td3}
Fujimoto, S., van Hoof, H., and Meger, D.
\newblock {Addressing Function Approximation Error in Actor-Critic Methods}.
\newblock In \emph{International Conference on Machine Learning (ICML)}, 2018.

\bibitem[Gupta et~al.(2018{\natexlab{a}})Gupta, Eysenbach, Finn, and
  Levine]{gupta2018unsupervised}
Gupta, A., Eysenbach, B., Finn, C., and Levine, S.
\newblock Unsupervised meta-learning for reinforcement learning.
\newblock \emph{CoRR}, abs:1806.04640, 2018{\natexlab{a}}.

\bibitem[Gupta et~al.(2018{\natexlab{b}})Gupta, Mendonca, Liu, Abbeel, and
  Levine]{gupta2018structuredexploration}
Gupta, A., Mendonca, R., Liu, Y., Abbeel, P., and Levine, S.
\newblock {Meta-Reinforcement Learning of Structured Exploration Strategies}.
\newblock In \emph{Advances in Neural Information Processing Systems (NIPS)},
  2018{\natexlab{b}}.

\bibitem[Haarnoja et~al.(2018)Haarnoja, Zhou, Hartikainen, Tucker, Ha, Tan,
  Kumar, Zhu, Gupta, Abbeel, and Levine]{haarnoja2018sacapp}
Haarnoja, T., Zhou, A., Hartikainen, K., Tucker, G., Ha, S., Tan, J., Kumar,
  V., Zhu, H., Gupta, A., Abbeel, P., and Levine, S.
\newblock Soft actor-critic algorithms and applications.
\newblock \emph{CoRR}, abs/1812.05905, 2018.

\bibitem[Hausman et~al.(2018)Hausman, Springenberg, Wang, Heess, and
  Riedmiller]{hausman2018skillembedding}
Hausman, K., Springenberg, J.~T., Wang, Z., Heess, N., and Riedmiller, M.
\newblock {Learning an Embedding Space for Transferable Robot Skills}.
\newblock In \emph{International Conference on Learning Representations
  (ICLR)}, pp.\  1--16, 2018.

\bibitem[Hazan et~al.(2019)Hazan, Kakade, Singh, and Soest]{hazan2019provably}
Hazan, E., Kakade, S.~M., Singh, K., and Soest, A.~V.
\newblock Provably efficient maximum entropy exploration.
\newblock \emph{International Conference on Machine Learning (ICML)}, 2019.

\bibitem[Higgins et~al.(2017)Higgins, Matthey, Pal, Burgess, Glorot, Botvinick,
  Mohamed, and Lerchner]{higgins2016beta}
Higgins, I., Matthey, L., Pal, A., Burgess, C., Glorot, X., Botvinick, M.,
  Mohamed, S., and Lerchner, A.
\newblock $\beta$-vae: Learning basic visual concepts with a constrained
  variational framework.
\newblock \emph{International Conference on Learning Representations (ICLR)},
  2017.

\bibitem[Kaelbling(1993)]{kaelbling1993goals}
Kaelbling, L.~P.
\newblock {Learning to achieve goals}.
\newblock In \emph{International Joint Conference on Artificial Intelligence
  (IJCAI)}, volume vol.2, pp.\  1094 -- 8, 1993.

\bibitem[Kalakrishnan et~al.(2011)Kalakrishnan, Righetti, Pastor, and
  Schaal]{kalakrishnan2011learning}
Kalakrishnan, M., Righetti, L., Pastor, P., and Schaal, S.
\newblock Learning force control policies for compliant manipulation.
\newblock In \emph{2011 IEEE/RSJ International Conference on Intelligent Robots
  and Systems}, pp.\  4639--4644. IEEE, 2011.

\bibitem[Lillicrap et~al.(2016)Lillicrap, Hunt, Pritzel, Heess, Erez, Tassa,
  Silver, and Wierstra]{lillicrap2015continuous}
Lillicrap, T.~P., Hunt, J.~J., Pritzel, A., Heess, N., Erez, T., Tassa, Y.,
  Silver, D., and Wierstra, D.
\newblock {Continuous control with deep reinforcement learning}.
\newblock In \emph{International Conference on Learning Representations
  (ICLR)}, 2016.
\newblock ISBN 0-7803-3213-X.
\newblock \doi{10.1613/jair.301}.

\bibitem[Lopes et~al.(2012)Lopes, Lang, Toussaint, and
  Oudeyer]{lopes2012exploration}
Lopes, M., Lang, T., Toussaint, M., and Oudeyer, P.-Y.
\newblock Exploration in model-based reinforcement learning by empirically
  estimating learning progress.
\newblock In \emph{Advances in Neural Information Processing Systems}, pp.\
  206--214, 2012.

\bibitem[Mohamed \& Rezende(2015)Mohamed and Rezende]{mohamed2015variational}
Mohamed, S. and Rezende, D.~J.
\newblock Variational information maximisation for intrinsically motivated
  reinforcement learning.
\newblock In \emph{Advances in neural information processing systems}, pp.\
  2125--2133, 2015.

\bibitem[Nachum et~al.(2018)Nachum, Brain, Gu, Lee, and Levine]{nachum2018hiro}
Nachum, O., Brain, G., Gu, S., Lee, H., and Levine, S.
\newblock {Data-Efficient Hierarchical Reinforcement Learning}.
\newblock In \emph{Advances in Neural Information Processing Systems
  (NeurIPS)}, 2018.

\bibitem[Nair et~al.(2018)Nair, Pong, Dalal, Bahl, Lin, and
  Levine]{nair2018rig}
Nair, A., Pong, V., Dalal, M., Bahl, S., Lin, S., and Levine, S.
\newblock {Visual Reinforcement Learning with Imagined Goals}.
\newblock In \emph{Advances in Neural Information Processing Systems
  (NeurIPS)}, 2018.

\bibitem[Nielsen \& Nock(2010)Nielsen and Nock]{nielsen2010entropies}
Nielsen, F. and Nock, R.
\newblock Entropies and cross-entropies of exponential families.
\newblock In \emph{Image Processing (ICIP), 2010 17th IEEE International
  Conference on}, pp.\  3621--3624. IEEE, 2010.

\bibitem[Ostrovski et~al.(2017)Ostrovski, Bellemare, Oord, and
  Munos]{ostrovski2017count}
Ostrovski, G., Bellemare, M.~G., Oord, A., and Munos, R.
\newblock Count-based exploration with neural density models.
\newblock In \emph{International Conference on Machine Learning (ICML)}, pp.\
  2721--2730, 2017.

\bibitem[Pathak et~al.(2017)Pathak, Agrawal, Efros, and
  Darrell]{pathak2017curiosity}
Pathak, D., Agrawal, P., Efros, A.~A., and Darrell, T.
\newblock {Curiosity-Driven Exploration by Self-Supervised Prediction}.
\newblock In \emph{International Conference on Machine Learning (ICML)}, pp.\
  488--489. IEEE, 2017.

\bibitem[P{\'{e}}r{\'{e}} et~al.(2018)P{\'{e}}r{\'{e}}, Forestier, Sigaud, and
  Oudeyer]{pere2018unsupervised}
P{\'{e}}r{\'{e}}, A., Forestier, S., Sigaud, O., and Oudeyer, P.-Y.
\newblock {Unsupervised Learning of Goal Spaces for Intrinsically Motivated
  Goal Exploration}.
\newblock In \emph{International Conference on Learning Representations
  (ICLR)}, 2018.

\bibitem[Pong et~al.(2018)Pong, Gu, Dalal, and Levine]{pong2018tdm}
Pong, V., Gu, S., Dalal, M., and Levine, S.
\newblock {Temporal Difference Models: Model-Free Deep RL For Model-Based
  Control}.
\newblock In \emph{International Conference on Learning Representations
  (ICLR)}, 2018.

\bibitem[Rubin(1988)]{rubin1988using}
Rubin, D.~B.
\newblock Using the sir algorithm to simulate posterior distributions.
\newblock \emph{Bayesian statistics}, 3:\penalty0 395--402, 1988.

\bibitem[Savinov et~al.(2018)Savinov, Raichuk, Marinier, Vincent, Pollefeys,
  Lillicrap, and Gelly]{savinov2018episodic}
Savinov, N., Raichuk, A., Marinier, R., Vincent, D., Pollefeys, M., Lillicrap,
  T., and Gelly, S.
\newblock Episodic curiosity through reachability.
\newblock \emph{arXiv preprint arXiv:1810.02274}, 2018.

\bibitem[Schaul et~al.(2015)Schaul, Horgan, Gregor, and Silver]{schaul2015uva}
Schaul, T., Horgan, D., Gregor, K., and Silver, D.
\newblock {Universal Value Function Approximators}.
\newblock In \emph{International Conference on Machine Learning (ICML)}, pp.\
  1312--1320, 2015.
\newblock ISBN 9781510810587.

\bibitem[Stadie et~al.(2016)Stadie, Levine, and Abbeel]{stadie2016exploration}
Stadie, B.~C., Levine, S., and Abbeel, P.
\newblock {Incentivizing Exploration In Reinforcement Learning With Deep
  Predictive Models}.
\newblock In \emph{International Conference on Learning Representations
  (ICLR)}, 2016.

\bibitem[Sutton et~al.(1999)Sutton, Precup, and Singh]{sutton1999between}
Sutton, R.~S., Precup, D., and Singh, S.
\newblock Between mdps and semi-mdps: A framework for temporal abstraction in
  reinforcement learning.
\newblock \emph{Artificial intelligence}, 112\penalty0 (1-2):\penalty0
  181--211, 1999.

\bibitem[Tang et~al.(2017)Tang, Houthooft, Foote, Stooke, Chen, Duan, Schulman,
  {De Turck}, and Abbeel]{tang2017hashtag}
Tang, H., Houthooft, R., Foote, D., Stooke, A., Chen, X., Duan, Y., Schulman,
  J., {De Turck}, F., and Abbeel, P.
\newblock {{\#}Exploration: A Study of Count-Based Exploration for Deep
  Reinforcement Learning}.
\newblock In \emph{Neural Information Processing Systems (NIPS)}, 2017.

\bibitem[Todorov et~al.(2012)Todorov, Erez, and Tassa]{todorov12mujoco}
Todorov, E., Erez, T., and Tassa, Y.
\newblock {MuJoCo: A physics engine for model-based control}.
\newblock In \emph{IEEE/RSJ International Conference on Intelligent Robots and
  Systems (IROS)}, pp.\  5026--5033, 2012.
\newblock ISBN 9781467317375.
\newblock \doi{10.1109/IROS.2012.6386109}.

\bibitem[Veeriah et~al.(2018)Veeriah, Oh, and Singh]{veeriah2018many}
Veeriah, V., Oh, J., and Singh, S.
\newblock Many-goals reinforcement learning.
\newblock \emph{arXiv preprint arXiv:1806.09605}, 2018.

\bibitem[Warde{-}Farley et~al.(2018)Warde{-}Farley, de~Wiele, Kulkarni,
  Ionescu, Hansen, and Mnih]{wardefarley2018discern}
Warde{-}Farley, D., de~Wiele, T.~V., Kulkarni, T., Ionescu, C., Hansen, S., and
  Mnih, V.
\newblock Unsupervised control through non-parametric discriminative rewards.
\newblock \emph{CoRR}, abs/1811.11359, 2018.

\bibitem[Zhao \& Tresp(2019)Zhao and Tresp]{zhao2019rankweight}
Zhao, R. and Tresp, V.
\newblock Curiosity-driven experience prioritization via density estimation.
\newblock \emph{CoRR}, abs/1902.08039, 2019.

\bibitem[Zhao et~al.(2019)Zhao, Sun, and Tresp]{zhao2019maximum}
Zhao, R., Sun, X., and Tresp, V.
\newblock Maximum entropy-regularized multi-goal reinforcement learning.
\newblock In \emph{International Conference on Machine Learning}, pp.\
  7553--7562, 2019.

\end{thebibliography}
}

\clearpage
\newpage
\pagebreak

\appendix

\section{Proofs}\label{sec:proofs}
The definitions of continuity and convergence for pseudo-metrics are similar to those for metrics, and we state them below.

A function $f: \gQ \mapsto \gQ$ is continuous with respect to a pseudo-metric $d$ if for any $p \in \gQ$ and any scalar $\epsilon > 0$, there exists a $\delta$ such that for all \mbox{$q \in \gQ$},
\begin{align*}
    d(p, q) < \delta
    \implies
    d(f(p), f(q)) < \epsilon.
\end{align*}

An infinite sequence $p_1, p_2 \dots$ converges to a value $p$ with respect to a pseudo-metric $d$, which we write as
\begin{align*}
    \limt p_t \rightarrow p,
\end{align*}
if
\begin{align*}
    \limt d(p_t, p) \rightarrow 0.
\end{align*}

Note that if $f$ is continuous, then
\begin{align*}
    \lim_{t \rightarrow \infty} \dent(p_t, q) \rightarrow 0
    \implies
    \lim_{t \rightarrow \infty} \dent(f(p_t), f(q)) \rightarrow 0.
\end{align*}

\subsection{Proof of Lemma 3.1}\label{sec:general-proof}
\begin{lemma}
Let $\Imgs$ be a compact set.
Define the set of distributions $\gQ = \{p : \support(p) \subseteq \Imgs\}$.
Let $\gF: \gQ \mapsto \gQ$ be continuous with respect to the pseudometric \mbox{$\dent(p, q) \triangleq |\gH(p) - \gH(q)|$} and $\gH(\gF(p)) \geq \gH(p)$ with equality if and only if $p$ is the uniform probability distribution on $\Imgs$, denoted as $\U$.
Define the sequence of distributions $P = (p_1, p_2, \dots)$ by starting with any $p_1 \in \gQ$ and recursively defining $p_{t+1} = \gF(p_t)$.
The sequence $P$ converges to $\U$ with respect to $\dent$. In other words, \mbox{$\lim_{t \rightarrow 0} |\gH(p_t) - \gH(\U)| \rightarrow 0$}.
\end{lemma}
\begin{proof}
The idea of the proof is to show that the distance (with respect to $\dent$) between $p_t$ and $\U$ converges to a value.
If this value is $0$, then the proof is complete since $\U$ uniquely has zero distance to itself.
Otherwise, we will show that this implies that $\gF$ is not continuous, which a contradiction.

For shorthand, define $d_t$ to be the $\dent$-distance to the uniform distribution, as in
\begin{align*}
    d_t \triangleq \dent(p_t, \U).
\end{align*}
First we prove that $d_t$ converges.
Since the entropies of the sequence $(p_1, \dots)$ monotonically increase, we have that
\begin{align*}
    d_1 \geq d_2 \geq \dots.
\end{align*}
We also know that $d_t$ is lower bounded by $0$, and so by the monotonic convergence theorem, we have that
\begin{align*}
    \lim_{t\rightarrow\infty} d_t \rightarrow d^*.
\end{align*}
for some value $d^* \geq 0$.

To prove the lemma, we want to show that $d^* = 0$.
Suppose, for contradiction, that $d^* \neq 0$.
Then consider any distribution, $q^*$, such that $\dent(q^*, \U) = d^*$.
Such a distribution always exists since we can continuously interpolate entropy values between $\gH(p_1)$ and $\gH(\U)$ with a mixture distribution.
Note that $q^* \neq \U$ since \mbox{$\dent(\U, \U) = 0$}.
Since \mbox{$\limt d_t \rightarrow d^*$}, we have that
\begin{align}\label{eq:dent-distance-goes-to-zero}
    \limt
    \dent(p_t, q^*) \rightarrow 0,
\end{align}
and so
\begin{align*}
    \limt p_t \rightarrow q^*.
\end{align*}
Because the function $\gF$ is continuous with respect to $\dent$, \autoref{eq:dent-distance-goes-to-zero} implies that
\begin{align*}
    \limt
    \dent(\gF(p_t), \gF(q^*)) \rightarrow 0.
\end{align*}
However, since $\gF(p_t) = p_{t+1}$ we can equivalently write the above equation as
\begin{align*}
    \limt
    \dent(p_{t+1}, \gF(q^*)) \rightarrow 0.
\end{align*}
which, through a change of index variables, implies that
\begin{align*}
    \limt
    p_t \rightarrow \gF(q^*)
\end{align*}
Since $q^*$ is not the uniform distribution, we have that \mbox{$\gH(\gF(q^*)) > \gH(q^*)$}, which implies that $\gF(q^*)$ and $q^*$ are unique distributions.
So, $p_t$ converges to two distinct values, $q^*$ and $\gF(q^*)$, which is a contradiction.
Thus, it must be the case that $d^* = 0$, completing the proof.
\end{proof}

\subsection{Proof of Lemma 3.2}\label{sec:covariance-proof}
\begin{lemma}\label{lemma:covariance-general}
Given two distribution $p(x)$ and $q(x)$ where $p \ll q$ and
\begin{align}
   0 < \Cov_p[\log p(X), \log q(X)]
\end{align}
define the distribution $p_\alpha$ as
\begin{align*}
    p_\alpha(x) &= \frac{1}{Z_\alpha} p(x) q(x)^\alpha
\end{align*}
where $\alpha \in \R$ and $Z_\alpha$ is the normalizing factor.
Let $\gH_\alpha(\alpha)$ be the entropy of $p_\alpha$.
Then there exists a constant $a > 0$ such that for all $\alpha \in [-a, 0)$,
\begin{align}
    \gH_\alpha(\alpha) > \gH_\alpha(0) = \gH(p).
\end{align}

\end{lemma}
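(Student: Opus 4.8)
The plan is to treat $p_\alpha$ as an exponential tilt of the base distribution $p$ by the statistic $\log q$ and to show that the map $\alpha \mapsto \gH_\alpha(\alpha)$ has a strictly negative derivative at $\alpha = 0$; since we approach $0$ from the left, a negative slope at $0$ forces the entropy to exceed $\gH_\alpha(0) = \gH(p)$ for all sufficiently small negative $\alpha$. Writing $Z_\alpha = \E_p[q(X)^\alpha]$, we have $\log p_\alpha(x) = \log p(x) + \alpha \log q(x) - \log Z_\alpha$; at $\alpha = 0$ this gives $Z_0 = 1$ and $p_0 = p$, so the boundary identity $\gH_\alpha(0) = \gH(p)$ is immediate and only the strict inequality needs work.

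First I would differentiate the log-partition function. Passing the derivative through the integral yields $\frac{d}{d\alpha}\log Z_\alpha = \E_{p_\alpha}[\log q(X)]$, and therefore $\frac{d}{d\alpha}\log p_\alpha(x) = \log q(x) - \E_{p_\alpha}[\log q(X)]$. Differentiating the entropy $\gH_\alpha(\alpha) = -\E_{p_\alpha}[\log p_\alpha(X)]$ and using the normalization identity $\int \frac{d}{d\alpha} p_\alpha(x)\,dx = 0$ — which annihilates both the term $p_\alpha \frac{d}{d\alpha}\log p_\alpha$ and the $\log Z_\alpha$ contribution — collapses the derivative to a covariance under the tilted law:
\begin{align*}
\frac{d \gH_\alpha}{d\alpha}
= -\Cov_{p_\alpha}\!\left[\log q(X),\ \log p(X) + \alpha \log q(X)\right].
\end{align*}

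Evaluating at $\alpha = 0$, where $p_\alpha = p$, this reduces to $\frac{d \gH_\alpha}{d\alpha}\big|_{\alpha=0} = -\Cov_p[\log p(X), \log q(X)]$, which is strictly negative by the positivity assumption on the covariance. A first-order expansion $\gH_\alpha(\alpha) = \gH(p) + \frac{d \gH_\alpha}{d\alpha}\big|_{\alpha=0}\,\alpha + o(\alpha)$ then makes the leading correction $\frac{d \gH_\alpha}{d\alpha}\big|_{\alpha=0}\,\alpha$ a product of two negative numbers, hence positive, for $\alpha < 0$; so there is an $a > 0$ with $\gH_\alpha(\alpha) > \gH_\alpha(0) = \gH(p)$ for every $\alpha \in [-a, 0)$.

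The main obstacle is analytic rather than algebraic: each step differentiates under the integral sign, which must be justified by dominating the integrands $q^\alpha$, $|\log q|\,q^\alpha$, and $(\log q)^2 q^\alpha$ uniformly for $\alpha$ in a neighborhood of $0$, and by checking that the corresponding moments are finite so that $\gH_\alpha$ is $C^1$ near $0$. In the \METHOD setting this is guaranteed by the regularity of the construction: $\Imgs$ is compact and the generative model $q = \pgt$ is bounded away from zero (the fixed-variance Gaussian decoder prevents the density from collapsing), so $\log q$ is bounded on $\Imgs$ and every integrand is dominated, validating the interchanges by dominated convergence; the hypothesis $p \ll q$ guarantees the tilt is well defined wherever $p$ has mass.
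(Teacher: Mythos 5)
Your proof is correct and follows essentially the same route as the paper: both view $p_\alpha$ as a one-parameter exponential tilt with sufficient statistic $\log q$, compute $\frac{d}{d\alpha}\gH_\alpha(\alpha)\big|_{\alpha=0} = -\Cov_p[\log p(X), \log q(X)] < 0$, and conclude by a first-order expansion to the left of $0$. The only cosmetic difference is that you differentiate the entropy integral directly while the paper invokes the closed-form entropy of an exponential family from \cite{nielsen2010entropies}; your added remarks on justifying differentiation under the integral (compactness of $\Imgs$, $\log q$ bounded) address a regularity point the paper leaves implicit.
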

\begin{proof}
Observe that $\{p_\alpha : \alpha \in [-1, 0]\}$ is a one-dimensional exponential family
\begin{align*}
    p_\alpha(x) = e^{\alpha T(x) - A(\alpha) + k(x)}
\end{align*}
with log carrier density $k(x) = \log p(x)$, natural parameter $\alpha$, sufficient statistic $T(x) = \log q(x)$, and log-normalizer $A(\alpha) = \int_{\gX} e^{\alpha T(x) + k(x)}dx$.
As shown in \cite{nielsen2010entropies}, the entropy of a distribution from a one-dimensional exponential family with parameter $\alpha$ is given by:
\begin{align*}
    \gH_\alpha(\alpha) \triangleq
    \gH(p_\alpha)
    = A(\alpha) - \alpha A'(\alpha) - \E_{p_\alpha}[k(X)]
\end{align*}
The derivative with respect to $\alpha$ is then
\begin{align*}
    \frac{d}{d \alpha}\gH_\alpha(\alpha)
        &= - \alpha A''(\alpha) - \frac{d}{d\alpha}\E_{p_\alpha}[k(x)]\\
        &= - \alpha A''(\alpha) - \E_\alpha[k(x)(T(x) - A'(\alpha)]\\
        &= - \alpha \Var_{p_\alpha}[T(x)] - \Cov_{p_\alpha}[k(x), T(x)]
\end{align*}
where we use the fact that the $n$th derivative of $A(\alpha)$ give the $n$ central moment, i.e. $A'(\alpha) = \E_{p_\alpha}[T(x)]$ and $A''(\alpha) = \Var_{p_\alpha}[T(x)]$.
The derivative of $\alpha = 0$ is
\begin{align*}
    \frac{d}{d \alpha}\gH_\alpha(0)
        &= - \Cov_{p_0}[k(x), T(x)]\\
        &= - \Cov_p[\log p(x), \log q(x)]
\end{align*}
which is negative by assumption.
Because the derivative at $\alpha = 0$ is negative, then there exists a constant $a > 0$ such that for all $\alpha \in [-a, 0]$, $\gH_\alpha(\alpha) > \gH_\alpha(0) = \gH(p)$.
\end{proof}
The paper applies \label{lemma:covariance-general} to the case where $q = \pg$ and $p=\pstate$.
When we take $N \rightarrow \infty$, we have that $\pskewed$ corresponds to $p_\alpha$ above.

\subsection{Simple Case Proof}\label{sec:simple-case-proof}
We prove the convergence directly for the (even more) simplified case when $\papprox = \pspgt$ using a similar technique:
\begin{lemma}\label{lemma:all-equal}
Assume the set $\Imgs$ has finite volume so that its uniform distribution $\U$ is well defined and has finite entropy.
Given any distribution $p(\st)$ whose support is $\Imgs$, recursively define $p_t$ with $p_1 = p$ and
\begin{align*}
    p_{t+1}(\st)
        &= \frac{1}{Z_\alpha^t} p_t(\st)^{\alpha}, \quad \forall \st \in \Imgs
\end{align*}
where $Z_\alpha^t$ is the normalizing constant and $\alpha \in [0, 1)$.

The sequence $(p_1, p_2, \dots)$ converges to $\U$, the uniform distribution $\Imgs$.
\end{lemma}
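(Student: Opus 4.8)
The plan is to recognize the recursion as repeated application of the single map $\gF(p) \triangleq \frac{1}{Z_\alpha}p^\alpha$, where $Z_\alpha = \int_\Imgs p(\st)^\alpha \, d\st$, and then invoke the general convergence result \Lemmaref{lemma:general-convergence}. To do so I must verify its three hypotheses for this particular $\gF$. Well-definedness and the self-map property $\gF:\gQ\mapsto\gQ$ are quick: since $\alpha\in[0,1)$ the map $x\mapsto x^\alpha$ is concave, so Jensen's inequality over the finite-volume set $\Imgs$ gives $Z_\alpha = \int_\Imgs p^\alpha \le \mathrm{vol}(\Imgs)^{1-\alpha} < \infty$, and $Z_\alpha>0$ because $p>0$ on $\Imgs$; moreover $p^\alpha>0$ exactly where $p>0$, so $\gF(p)$ is again a probability density with support $\Imgs$.

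The crux is the entropy condition $\gH(\gF(p))\ge\gH(p)$ with equality iff $p=\U$. First I would embed $p$ in the one-parameter exponential family $p_\beta \propto p^\beta$ for $\beta\in[0,1]$, which satisfies $p_1=p$, $p_\alpha=\gF(p)$, and $p_0=\U$. This is precisely the exponential family of \Lemmaref{lemma:covariance-general} with sufficient statistic $T=\log p$, natural parameter $\beta$, vanishing log-carrier $k\equiv 0$, and log-normalizer $A(\beta)=\log\int_\Imgs p^\beta$. Reusing the entropy formula derived there, $\gH(p_\beta)=A(\beta)-\beta A'(\beta)$, and differentiating gives
\begin{align*}
  \frac{d}{d\beta}\gH(p_\beta) = -\beta A''(\beta) = -\beta\,\Var_{p_\beta}[\log p] \le 0,
\end{align*}
using $A''(\beta)=\Var_{p_\beta}[\log p]\ge 0$. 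Hence $\gH(p_\beta)$ is nonincreasing on $[0,1]$, and since $\alpha<1$ we conclude $\gH(\gF(p))=\gH(p_\alpha)\ge\gH(p_1)=\gH(p)$.

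For the equality case I would integrate the derivative: $\gH(\gF(p))-\gH(p)=\int_\alpha^1 \beta\,\Var_{p_\beta}[\log p]\,d\beta$, a nonnegative integral of a continuous integrand, which vanishes iff $\Var_{p_\beta}[\log p]=0$ for every $\beta\in(\alpha,1)$. Because each $p_\beta$ has full support $\Imgs$, a vanishing variance forces $\log p$ to be constant almost everywhere on $\Imgs$, i.e. $p=\U$; conversely $\gF(\U)=\U$. This establishes the strict-increase-unless-uniform property, so \Lemmaref{lemma:general-convergence} applies and $(p_1,p_2,\dots)$ converges to $\U$.

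The step I expect to be most delicate is supplying the continuity hypothesis of \Lemmaref{lemma:general-convergence}: that hypothesis is stated with respect to the weak topology (the one under which Prokhorov's theorem yields sequential compactness), and the map $p\mapsto p^\alpha/Z_\alpha$ involves pointwise powers and renormalization that are not obviously weakly continuous without additional regularity such as uniform bounds on the densities. A secondary technical point is justifying the differentiation of $A(\beta)$ under the integral sign and the finiteness of $\gH(p_\beta)$, which I would handle by the standing finite-volume assumption together with mild integrability (e.g.\ boundedness) of $p$; the entropy-monotonicity computation itself is routine once the exponential-family structure is in place.
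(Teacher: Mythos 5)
Your entropy computation is essentially the paper's own: the same one-parameter exponential family $p_\beta \propto p^\beta$ with sufficient statistic $\log p$, the same identity $\frac{d}{d\beta}\gH(p_\beta) = -\beta\,\Var_{p_\beta}[\log p] \le 0$, and the same conclusion that $\gH(\gF(p)) = \gH(p_\alpha) \ge \gH(p_1) = \gH(p)$ with equality forcing $\log p$ constant on its full support, i.e.\ $p = \U$. That part is correct and matches the paper.

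The gap is in how you conclude convergence. You route the argument through \Lemmaref{lemma:general-convergence}, which requires $\gF$ to be continuous on $\gQ$ in the topology under which Prokhorov's theorem gives sequential compactness (weak convergence of measures). You flag this yourself but do not supply it, and it is not a cosmetic omission: $p \mapsto p^\alpha/Z_\alpha$ acts pointwise on densities and renormalizes, and weak convergence of measures does not control pointwise behavior of densities, so the hypothesis is genuinely nontrivial to verify (and plausibly fails without extra regularity on $\gQ$). The paper's proof is structured precisely to avoid this: it never invokes \Lemmaref{lemma:general-convergence}. Instead it observes that $\gH(p_{t+1}) \ge \gH(p_t)$ gives a monotone sequence of entropies bounded above by $\gH(\U)$, hence convergent; the increments $\gH(p_{t+1}) - \gH(p_t) = \int_\alpha^1 \theta\,\Var_{p_t^\theta}[\log p_t]\,d\theta$ therefore tend to zero; since $\alpha$ is bounded away from $1$ this forces $\Var[\log p_t] \to 0$, and full support then forces $\log p_t$ to become constant, i.e.\ $p_t \to \U$. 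If you replace your appeal to \Lemmaref{lemma:general-convergence} with this direct monotone-convergence argument (which uses only the derivative identity you already derived, applied to the family built from $p_t$ at each iteration $t$ rather than from a fixed $p$), the proof closes without any continuity hypothesis.
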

\begin{proof}
If $\alpha = 0$, then $p_2$ (and all subsequent distributions) will clearly be the uniform distribution.
We now study the case where $\alpha \in (0, 1)$.

At each iteration $t$, define the one-dimensional exponential family $\{\ptt : \theta \in [0, 1]\}$ where $\ptt$ is
\begin{align*}
    \ptt(\st) = e^{\theta T(\st) - A(\theta) + k(\st)}
\end{align*}
with log carrier density $k(\st) = 0$, natural parameter $\theta$, sufficient statistic $T(\st) = \log p_t(\st)$, and log-normalizer $A(\theta) = \int_{\Imgs} e^{\theta T(\st)} d\st$.
As shown in \cite{nielsen2010entropies}, the entropy of a distribution from a one-dimensional exponential family with parameter $\theta$ is given by:
\begin{align*}
    \htt(\theta) \triangleq \gH(\ptt) = A(\theta) - \theta A'(\theta)
\end{align*}
The derivative with respect to $\theta$ is then
\begin{align}\nonumber
    \frac{d}{d \theta}d\htt(\theta)
        &= - \theta A''(\theta)\\\nonumber
        &= - \theta \Var_{\st \sim \ptt}[T(\st)]\\\label{eq:easy-variance-case}
        &= - \theta \Var_{\st \sim \ptt}[\log p_t(\st)]\\\nonumber
        &\leq 0
\end{align}
where we use the fact that the $n$th derivative of $A(\theta)$ is the $n$ central moment, i.e. $A''(\theta) = \Var_{\st \sim \ptt}[T(\st)]$.
Since variance is always non-negative, this means the entropy is monotonically decreasing with $\theta$.
Note that $p_{t+1}$ is a member of this exponential family, with parameter $\theta = \alpha \in (0, 1)$.
So
\begin{align*}
    \gH(p_{t+1}) = \htt(\alpha) \geq \htt(1) = \gH(p_t)
\end{align*}
which implies
\begin{align*}
    \gH(p_1) \leq \gH(p_2) \leq \dots.
\end{align*}
This monotonically increasing sequence is upper bounded by the entropy of the uniform distribution, and so this sequence must converge.

The sequence can only converge if $\frac{d}{d \theta} \htt(\theta)$ converges to zero.
However, because $\alpha$ is bounded away from $0$, \Eqref{eq:easy-variance-case} states that this can only happen if
\begin{align}\label{eq:easy-logprob-becomes-constant}
    \Var_{\st \sim \ptt}[\log p_t(\st)] \rightarrow 0.
\end{align}
Because $p_t$ has full support, then so does $\ptt$.
Thus, \Eqref{eq:easy-logprob-becomes-constant} is only true if $\log p_t(\st)$ converges to a constant, i.e. $p_t$ converges to the uniform distribution.
\end{proof}


\section{Additional Experiments}\label{sec:add-exps}

\subsection{Sensitivity Analysis}\label{sensitivity}
\paragraph{Sensitivity to RL Algorithm}
In our experiments, we combined \METHOD with soft actor critic (SAC)~\citep{haarnoja2018sacapp}.
We conduct a set of experiments to test whether \METHOD may be used with other RL algorithms for training the goal-conditioned policy.
To that end, we replaced SAC with twin delayed deep deterministic policy gradient (TD3)~\citep{fujimoto2018td3} and ran the same \METHOD experiments on Visual Door, Visual Pusher, and Visual Pickup.
In \autoref{fig:rl-sweep}, we see that \METHOD performs consistently well with both SAC and TD3, demonstrating that \METHOD is beneficial across multiple RL algorithms.
\begin{figure}
    \begin{subfigure}{\linewidth}
      \centering
      \includegraphics[width=\linewidth]{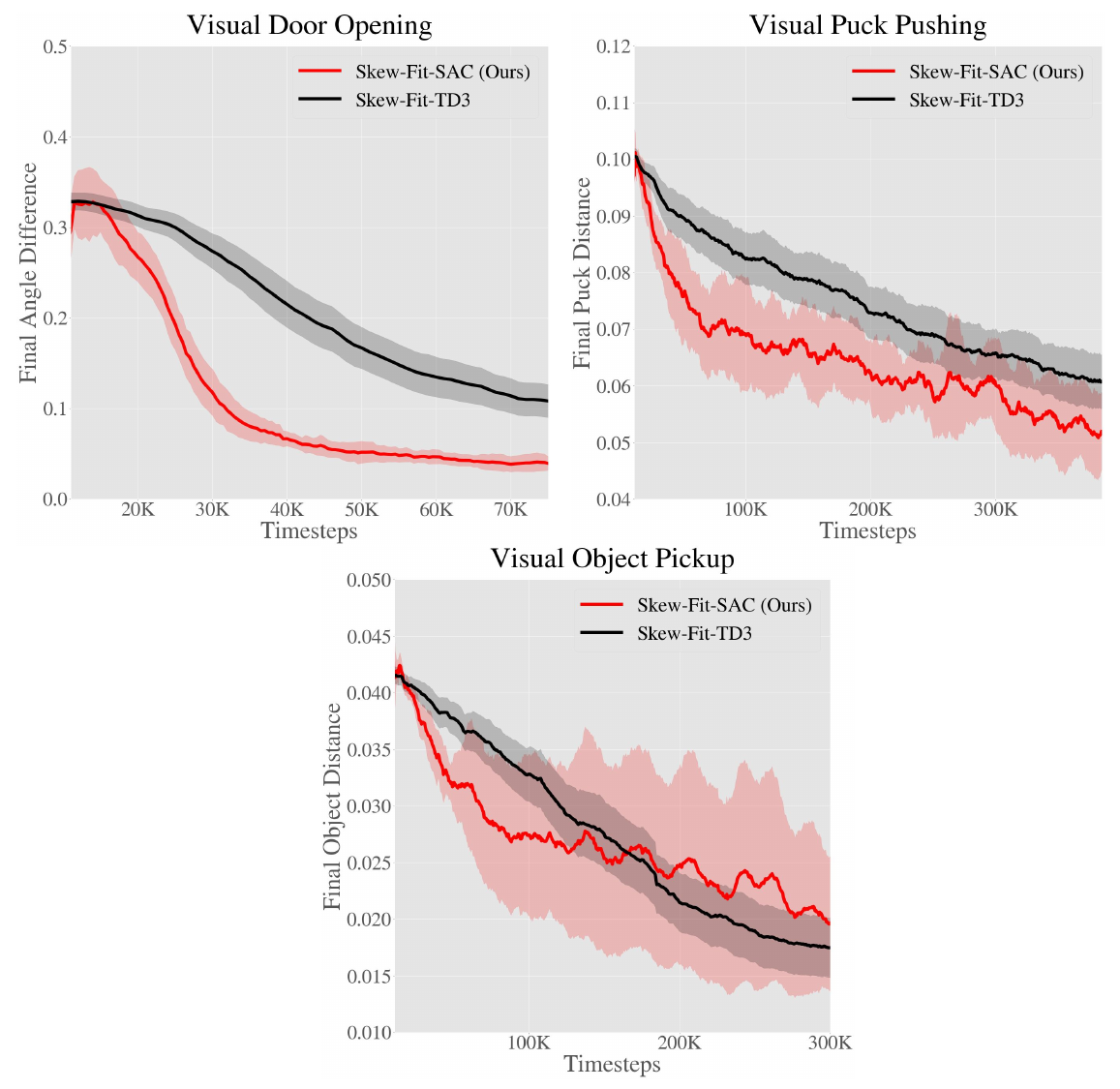}
      \label{fig:door-rl-sweep}
    \end{subfigure}
    \\
  \vspace{-.1in}
  \fcaption{
  We compare using SAC~\citep{haarnoja2018sacapp} and TD3~\citep{fujimoto2018td3} as the underlying RL algorithm on Visual Door, Visual Pusher and Visual Pickup.
  We see that \METHOD works consistently well with both SAC and TD3, demonstrating that \METHOD may be used with various RL algorithms.
  For the experiments presented in \autoref{sec:experiments}, we used SAC.
  }
  \vspace{-0.05 cm}
  \label{fig:rl-sweep}
\end{figure}

\paragraph{Sensitivity to $\alpha$ Hyperparameter}
We study the sensitivity of the $\alpha$ hyperparameter by testing values of \mbox{$\alpha \in [-1, -0.75, -0.5, -0.25, 0]$} on the Visual Door and Visual Pusher task.
The results are included in \Figref{fig:alpha-sweep} and shows that our method is  robust to different parameters of $\alpha$, particularly for the more challenging Visual Pusher task.
Also, the method consistently outperform $\alpha=0$, which is equivalent to sampling uniformly from the replay buffer.
\begin{figure}[!ht]
  \centering
  \includegraphics[width=1\linewidth]{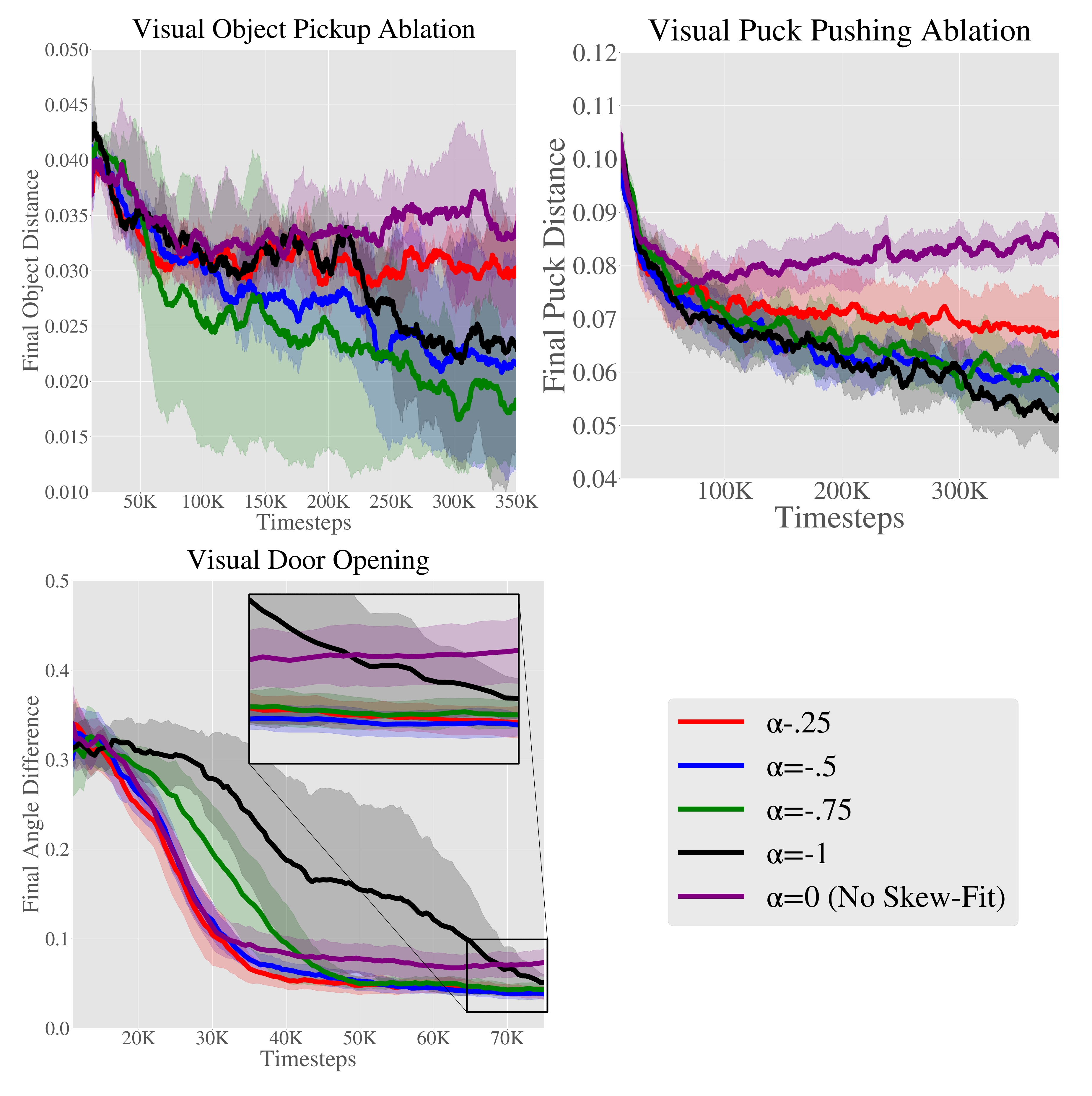}
  \vspace{-.05in}
  \label{fig:door-alpha-sweep}
  \vspace{-.1in}
  \fcaption{
  We sweep different values of $\alpha$ on Visual Door, Visual Pusher and Visual Pickup.
  \METHOD helps the final performance on the Visual Door task, and outperforms No Skew-Fit ($\alpha=0$) as seen in the zoomed in version of the plot.
  In the more challenging Visual Pusher task, we see that \METHOD consistently helps and halves the final distance.
  Similarly, we observe that \METHOD consistently outperforms No Skew-fit on Visual Pickup.
  Note that alpha=-1 is not always the optimal setting for each environment, but outperforms $\alpha=0$ in each case in terms of final performance.
  }
  \vspace{-0.05 cm}
  \label{fig:alpha-sweep}
\end{figure}

 \subsection{Variance Ablation} \label{sec:analysis-variance}
 \begin{figure}[H]
     \vspace{0.1in}
         \centering
         \includegraphics[width=0.8\linewidth]{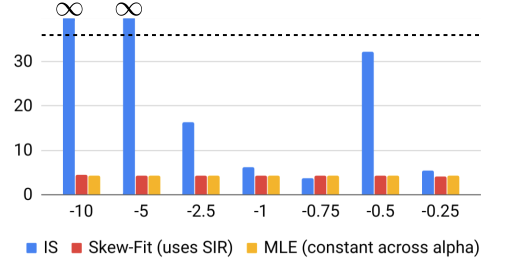}
         \fcaption{Gradient variance averaged across parameters in last epoch of training VAEs. Values of $\alpha$ less than $-1$ are numerically unstable for importance sampling (IS), but not for Skew-Fit.}
         \label{fig:grad-var}
     \label{fig:ul-variance-results}
 \end{figure}
We measure the gradient variance of training a VAE on an unbalanced Visual Door image dataset with \METHOD vs \METHOD with importance sampling (IS) vs no \METHOD (labeled MLE). We construct the imbalanced dataset by rolling out a random policy in the environment and collecting the visual observations. Most of the images contained the door in a closed position; in a few, the door was opened.
 In \autoref{fig:ul-variance-results}, we see that the gradient variance for Skew-Fit with IS is catastrophically large for large values of $\alpha$.
 In contrast, for Skew-Fit with SIR, which is what we use in practice, the variance is relatively similar to that of MLE. Additionally we trained three VAE's, one with MLE on a uniform dataset of valid door opening images, one with Skew-Fit on the unbalanced dataset from above, and one with MLE on the same unbalanced dataset. As expected, the VAE that has access to the uniform dataset gets the lowest negative log likelihood score.
 This is the oracle method, since in practice we would only have access to imbalanced data.
 As shown in \autoref{table:ll-ablation}, \METHOD considerably outperforms MLE, getting a much closer to oracle log likelihood score.
 \begin{table}
         \centering
         \begin{tabular}{|l|l|}
         \hline
             {\footnotesize \textbf{Method}}                 & {\footnotesize \textbf{NLL}}   \\\hline
             {\footnotesize MLE on uniform (oracle)} & {\footnotesize 20175.4}               \\\hline
             {\footnotesize Skew-Fit on unbalanced}           & {\footnotesize 20175.9}              \\\hline
             {\footnotesize MLE on unbalanced}                    & {\footnotesize 20178.03} \\\hline
         \end{tabular}
         \fcaption{Despite training on a unbalanced Visual Door dataset (see Figure 7 of paper), the negative log-likelihood (NLL) of Skew-Fit evaluated on a uniform dataset matches that of a VAE trained on a uniform dataset.}
         \label{table:ll-ablation}
 \end{table}

\subsection{Goal and Performance Visualization}\label{sec:vae-dump}
We visualize the goals sampled from \METHOD as well as those sampled when using the prior method, RIG~\citep{nair2018rig}.
As shown in \autoref{fig:vae_dump} and \autoref{fig:vae_dump_real}, the generative model $\pg$ results in much more diverse samples when trained with \METHOD.
We we see in \autoref{fig:example_rollouts}, this results in a policy that more consistently reaches the goal image.
\begin{figure*}
    \centering
    \includegraphics[width=0.9\textwidth]{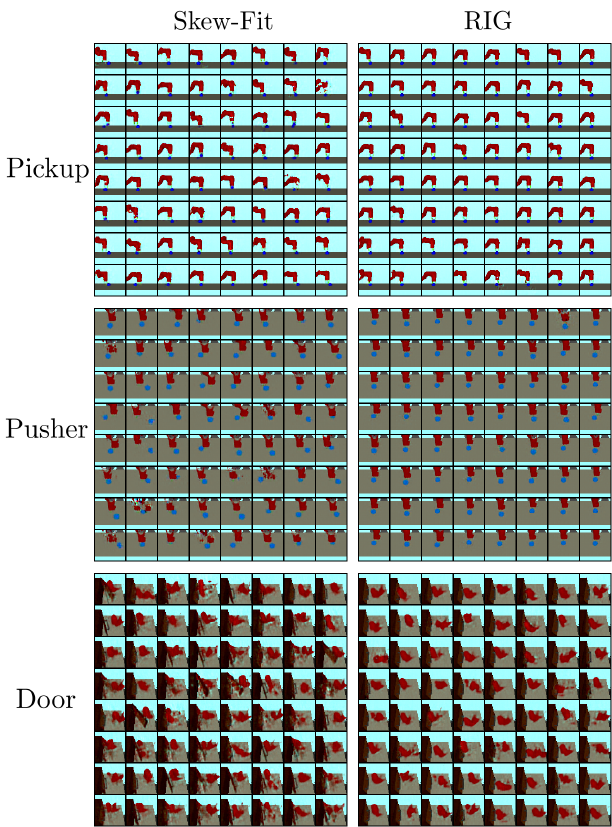}
    \fcaption{
Proposed goals from the VAE for RIG and with \METHOD on the \textit{Visual Pickup}, \textit{Visual Pusher}, and \textit{Visual Door} environments. Standard RIG produces goals where the door is closed and the object and puck is in the same position, while RIG + \METHOD proposes goals with varied puck positions, occasional object goals in the air, and both open and closed door angles.
    }
    \label{fig:vae_dump}
\end{figure*}

\begin{figure*}
    \centering
    \includegraphics[width=0.25\textheight]{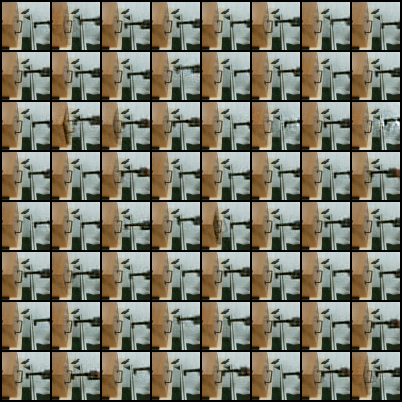}
    \hspace{.3 in}
    \includegraphics[width=0.25\textheight]{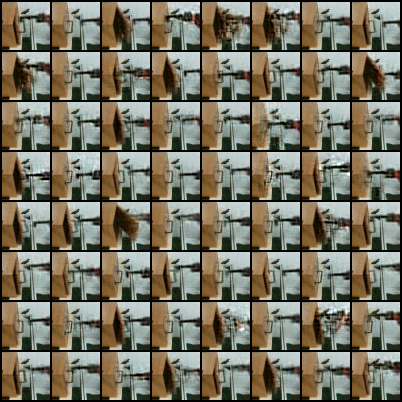}
    \fcaption{
Proposed goals from the VAE for RIG (left) and with RIG + \METHOD (right) on the \textit{Real World Visual Door} environment. Standard RIG produces goals where the door is closed  while RIG + \METHOD proposes goals with both open and closed door angles.
    }
    \label{fig:vae_dump_real}
\end{figure*}
\begin{figure*}
    \centering
    \includegraphics[width=0.9\textwidth]{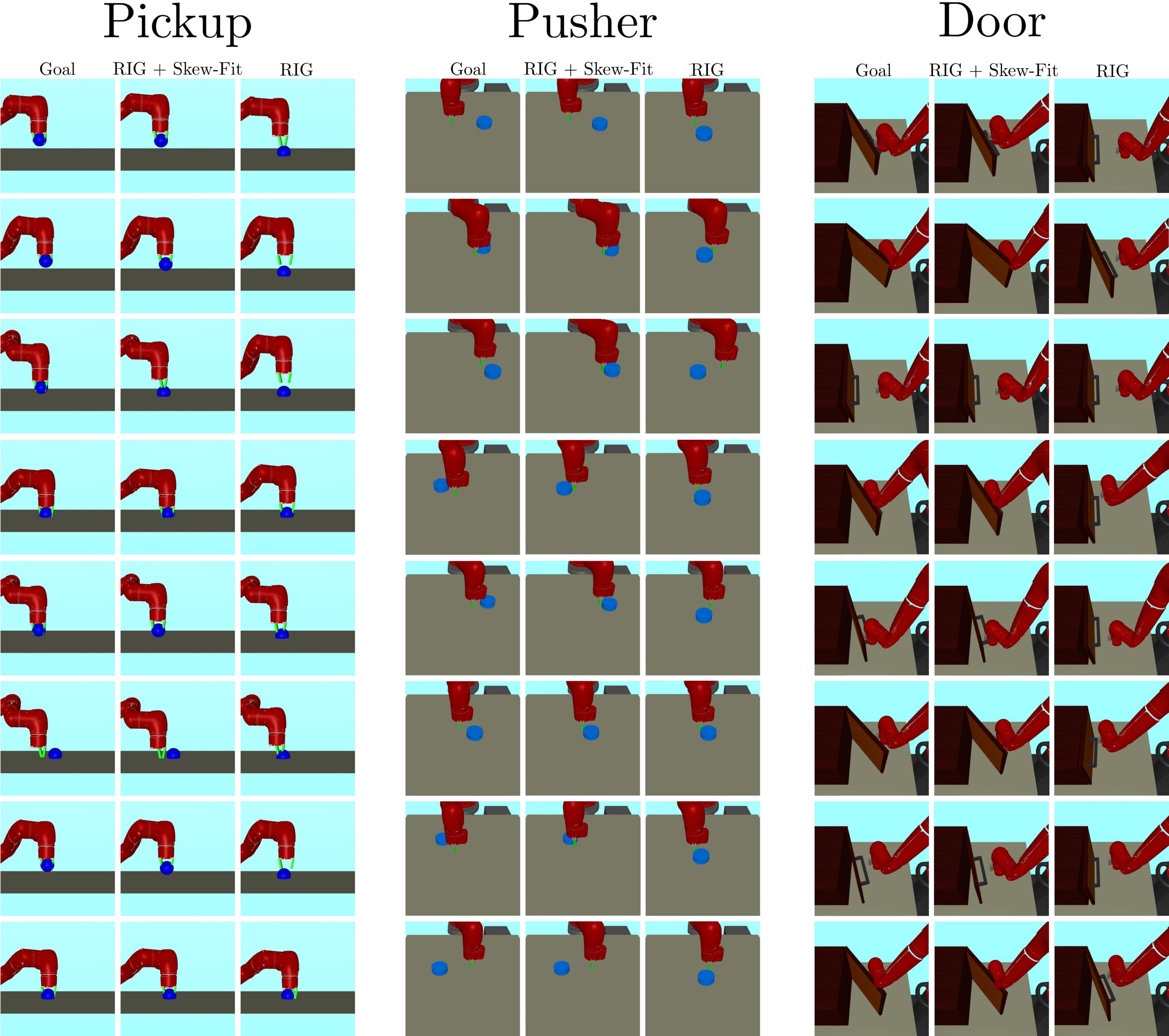}
    \fcaption{
Example reached goals by \METHOD and RIG. The first column of each environment section specifies the target goal while the second and third columns show reached goals by \METHOD and RIG. Both methods learn how to reach goals close to the initial position, but only \METHOD learns to reach the more difficult goals.
    }
    \label{fig:example_rollouts}
\end{figure*}

\section{Implementation Details}\label{sec:implementation-details}

\subsection{Likelihood Estimation using $\beta$-VAE}\label{sec:likelihood-estimation-vae}
We estimate the density under the VAE by using a sample-wise approximation to the marginal over $x$ estimated using importance sampling:
\begin{align*}
    \pgt(x) &= \mathbb E_{z \sim q_{\theta_t}(z|x)} \left[\frac{p(z)}{q_{\theta_t}(z|x)} p_{\psi_t}(x \mid z) \right]  \\
    &\approx \frac{1}{N} \sum_{i=1}^{N} \left[\frac{p(z)}{q_{\theta_t}(z|x)} p_{\psi_t}(x \mid z) \right].
\end{align*}
where $q_{\theta}$ is the encoder, $p_\psi$ is the decoder, and $p(z)$ is the prior, which in this case is unit Gaussian.
We found that sampling $N=10$ latents for estimating the density worked well in practice.

\subsection{Oracle 2D Navigation Experiments}\label{sec:2d-details}
We initialize the VAE to the bottom left corner of the environment for \textit{Four Rooms}.
Both the encoder and decoder have 2 hidden layers with [400, 300] units,  ReLU hidden activations, and no output activations.
The VAE has a latent dimension of $8$ and a Gaussian decoder trained with a fixed variance, batch size of $256$, and $1000$ batches at each iteration. The VAE is trained on the exploration data buffer every 1000 rollouts.

\subsection{Implementation of SAC and Prior Work}\label{sec:prior-work-implementation}
For all experiments, we trained the goal-conditioned policy using soft actor critic (SAC)~\citep{haarnoja2018sacapp}.
To make the method goal-conditioned, we concatenate the target XY-goal to the state vector.
During training, we retroactively relabel the goals~\citep{kaelbling1993goals,andrychowicz2017her} by sampling from the goal distribution with probabilty $0.5$.
Note that the original RIG~\cite{nair2018rig} paper used TD3~\cite{fujimoto2018td3}, which we also replaced with SAC in our implementation of RIG.
We found that maximum entropy policies in general improved the performance of RIG, and that we did not need to add noise on top of the stochastic policy's noise.
In the prior RIG method, the VAE was pre-trained on a uniform sampling of images from the state space of each environment.
In order to ensure a fair comparison to Skew-Fit, we forego pre-training and instead train the VAE alongside RL, using the variant described in the RIG paper.
For our RL network architectures and training scheme, we use fully connected networks for the policy, Q-function and value networks with two hidden layers of size $400$ and $300$ each.
We also delay training any of these networks for $10000$ time steps in order to collect sufficient data for the replay buffer as well as to ensure the latent space of the VAE is relatively stable (since we continuously train the VAE concurrently with RL training).
As in RIG, we train a goal-conditioned value functions~\cite{schaul2015uva} using hindsight experience replay~\cite{andrychowicz2017her}, relabelling $50\%$ of exploration goals as goals sampled from the VAE prior $\mathcal{N} (0, 1)$ and $30\%$ from future goals in the trajectory.

For our implementation of \citep{hazan2019provably}, we trained the policies with the reward
\begin{align*}
    r(s) = r_\text{Skew-Fit}(s) + \lambda \cdot r_\text{Hazan et al.}(s)
\end{align*}

For $r_\text{Hazan et al.}$, we use the reward described in Section 5.2 of \citet{hazan2019provably}, which requires an estimated likelihood of the state.
To compute these likelihood, we use the same method as in Skew-Fit (see \autoref{sec:likelihood-estimation-vae}).
With 3 seeds each, we tuned $\lambda$ across values $[100, 10, 1, 0.1, 0.01, 0.001]$ for the door task, but all values performed poorly.
For the pushing and picking tasks, we tested values across $[1, 0.1, 0.01, 0.001, 0.0001]$ and found that 0.1 and 0.01 performed best for each task, respectively.

\subsection{RIG with \METHOD Summary}\label{sec:rig-and-full-method}
\autoref{alg:rig-and-full-method-details} provides detailed pseudo-code for how we combined our method with RIG.
Steps that were removed from the base RIG algorithm are highlighted in blue and steps that were added are highlighted in red.
The main differences between the two are (1) not needing to pre-train the $\beta$-VAE, (2) sampling exploration goals from the buffer using $\pskewed$ instead of the VAE prior, (3) relabeling with replay buffer goals sampled using $\pskewed$ instead of from the VAE prior, and (4) training the VAE on replay buffer data data sampled using $\pskewed$ instead of uniformly.

\subsection{Vision-Based Continuous Control Experiments}\label{sec:vision-based-appendix}
In our experiments, we use an image size of 48x48.
For our VAE architecture, we use a modified version of the architecture used in the original RIG paper ~\cite{nair2018rig}.
Our VAE has three convolutional layers with kernel sizes: 5x5, 3x3, and 3x3, number of output filters: 16, 32, and 64 and strides: 3, 2, and 2.
We then have a fully connected layer with the latent dimension number of units, and then reverse the architecture with de-convolution layers.
We vary the latent dimension of the VAE, the $\beta$ term of the VAE and the $\alpha$ term for \METHOD based on the environment.
Additionally, we vary the training schedule of the VAE based on the environment. See the table at the end of the appendix for more details.
Our VAE has a Gaussian decoder with identity variance, meaning that we train the decoder with a mean-squared error loss.

When training the VAE alongside RL, we found the following three schedules to be effective for different environments:

\begin{enumerate}
    \item For first $5K$ steps: Train VAE using standard MLE training every $500$ time steps for $1000$ batches. After that, train VAE using \METHOD every $500$ time steps for $200$ batches.
    \item For first $5K$ steps: Train VAE using standard MLE training every $500$ time steps for $1000$ batches. For the next $45K$ steps, train VAE using \METHOD every $500$ steps for $200$ batches. After that, train VAE using \METHOD every $1000$ time steps for $200$ batches.
    \item For first $40K$ steps: Train VAE using standard MLE training every $4000$ time steps for $1000$ batches. Afterwards, train VAE using \METHOD every $4000$ time steps for $200$ batches.
\end{enumerate}

We found that initially training the VAE without \METHOD improved the stability of the algorithm.
This is due to the fact that density estimates under the VAE are constantly changing and inaccurate during the early phases of training. Therefore, it made little sense to use those estimates to prioritize goals early on in training.
Instead, we simply train using MLE training for the first $5K$ timesteps, and after that we perform \METHOD according to the VAE schedules above. Table \ref{table:general-hyperparams} lists the hyper-parameters that were shared across the continuous control experiments. Table \ref{table:env-hyperparams} lists hyper-parameters specific to each environment. Additionally, \autoref{sec:rig-and-full-method} discusses the combined RIG + Skew-Fit algorithm.

\begin{table*}
    \centering
    \begin{tabular}{c|c|c}
    \hline
    \textbf{Hyper-parameter} & \textbf{Value} & \textbf{Comments}\\
    \hline
    \# training batches per time step & $2$ & Marginal improvements after $2$\\
    Exploration Noise & None (SAC policy is stochastic) & Did not tune\\
    RL Batch Size & $1024$ & smaller batch sizes work as well\\
    VAE Batch Size &  $64$ & Did not tune \\
    Discount Factor & $0.99$ & Did not tune\\
    Reward Scaling & $1$ & Did not tune\\
    Policy Hidden Sizes & $[400, 300]$ & Did not tune\\
    Policy Hidden Activation & ReLU & Did not tune\\
    Q-Function Hidden Sizes & $[400, 300]$ & Did not tune\\
    Q-Function Hidden Activation & ReLU & Did not tune\\
    Replay Buffer Size & $100000$ & Did not tune\\
    Number of Latents for Estimating Density ($N$) & $10$ & Marginal improvements beyond $10$\\
    \hline
    \end{tabular}
\fcaption{General hyper-parameters used for all \textit{visual} experiments.}
\label{table:general-hyperparams}
\end{table*}

\begin{table*}
    \centering
    \begin{tabular}{c|c|c|c|c}
    \hline
    \textbf{Hyper-parameter} & \textbf{Visual Pusher} & \textbf{Visual Door} & \textbf{Visual Pickup} & \textbf{Real World Visual Door}\\
    \hline
    Path Length & $50$& $100$ & $50$ & $100$ \\
    $\beta$ for $\beta$-VAE & $20$ & $20$ & $30$ & $60$ \\
    Latent Dimension Size & $4$ & $16$ & $16$ & $16$ \\
    $\alpha$ for Skew-Fit & $-1$ & $-1/2$ & $-1$ & $-1/2$ \\
    VAE Training Schedule & $2$ & $1$ & $2$ & $1$ \\
    Sample Goals From & $\pg$ & $\pskewed$ & $\pskewed$ & $\pskewed$ \\
    \hline
    \end{tabular}
\fcaption{Environment specific hyper-parameters for the \textit{visual} experiments}
\label{table:env-hyperparams}
\end{table*}

\begin{table*}
    \centering
    \begin{tabular}{c|c}
    \hline
    \textbf{Hyper-parameter} & \textbf{Value} \\
    \hline
    \# training batches per time step & $.25$\\
    Exploration Noise & None (SAC policy is stochastic) \\
    RL Batch Size & $512$\\
    VAE Batch Size &  $64$\\
    Discount Factor & $\frac{299}{300}$\\
    Reward Scaling & $10$\\
    Path length & $300$\\
    Policy Hidden Sizes & $[400, 300]$\\
    Policy Hidden Activation & ReLU\\
    Q-Function Hidden Sizes & $[400, 300]$\\
    Q-Function Hidden Activation & ReLU\\
    Replay Buffer Size & $1000000$\\
    Number of Latents for Estimating Density ($N$) & $10$\\
    $\beta$ for $\beta$-VAE & $10$ \\
    Latent Dimension Size & $2$ \\
    $\alpha$ for Skew-Fit & $-2.5$ \\
    VAE Training Schedule & $3$ \\
    Sample Goals From & $\pskewed$ \\
    \hline
    \end{tabular}
\fcaption{Hyper-parameters used for the \textit{ant} experiment.}
\label{table:ant-hyperparams}
\end{table*}

\begin{algorithm}
   	\footnotesize
   	\fcaption{RIG and RIG + Skew-Fit. Blue text denotes RIG specific steps and red text denotes RIG + Skew-Fit specific steps}
   	\label{alg:rig-and-full-method-details}
   	\begin{algorithmic}[1]
    \REQUIRE $\beta$-VAE mean encoder $q_\phi$, $\beta$-VAE decoder $p_\psi$, policy $\pi_\theta$, goal-conditioned value function $Q_w$, skew parameter $\alpha$, VAE Training Schedule.
    \STATE \textcolor{blue}{Collect $\mathcal D = \{s^{(i)}\}$ using random initial policy.}
    \STATE \textcolor{blue}{Train $\beta$-VAE on data uniformly sampled from $\mathcal D$}.
    \STATE \textcolor{blue}{Fit prior $p(z)$ to latent encodings $\{\mu_\phi(s^{(i)})\}$.}
    \FOR{$n=0,...,N-1$ episodes}
        \STATE \textcolor{blue}{Sample latent goal from prior $z_g \sim p(z)$}.
        \STATE \textcolor{red}{Sample state $s_g \sim \pskewedn$ and encode $z_g = q_\phi(s_g)$ if $\mathcal R$ is nonempty. Otherwise sample $z_g \sim p(z)$}
        \STATE Sample initial state $s_0$ from the environment.
        \FOR{$t=0,...,H -1$ steps}
            \STATE Get action $a_t \sim \pi_\theta(q_\phi(s_t), z_g)$.
            \STATE Get next state $s_{t+1} \sim p(\cdot \mid s_t, a_t)$.
            \STATE Store $(s_t, a_t, s_{t+1}, z_g)$ into replay buffer $\mathcal R$.
            \STATE Sample transition $(s, a, s', z_g) \sim \mathcal R$.
            \STATE Encode $z = q_\phi(s), z' = q_\phi(s')$.
            \STATE \textcolor{blue}{(Probability $0.5$) replace $z_g$ with $z_g' \sim p(z)$.}
            \STATE \textcolor{red}{(Probability $0.5$) replace $z_g$ with $q_\phi(s'')$ where $s'' \sim \pskewedn$.}
            \STATE Compute new reward $r = -||z' - z_g||$.
            \STATE Minimize Bellman Error using $(z, a, z', z_g, r)$.
        \ENDFOR
        \FOR{$t=0,...,H -1$ steps}
            \FOR{$i=0,...,k-1$ steps}
                \STATE Sample future state $s_{h_i}$, $t < h_i \leq H-1$.
                \STATE Store $(s_t, a_t, s_{t+1}, q_\phi(s_{h_i}))$ into $\mathcal R$.
            \ENDFOR
        \ENDFOR
        \STATE \textcolor{red}{Construct skewed replay buffer distribution $\pskewednn$ using data from $\mathcal R$ with \Eqref{eq:pskew-defn}.}
        \IF {total steps $< 5000$}
            \STATE Fine-tune $\beta$-VAE on data uniformly sampled from $\mathcal R$ according to VAE Training Schedule.
        \ELSE
            \STATE \textcolor{blue}{Fine-tune $\beta$-VAE on data uniformly sampled from $\mathcal R$ according to VAE Training Schedule.}
            \STATE \textcolor{red}{Fine-tune $\beta$-VAE on data sampled from $\pskewednn$ according to VAE Training Schedule.}
        \ENDIF
    \ENDFOR
   	\end{algorithmic}
\end{algorithm}

\section{Environment Details}\label{sec:environment-details}
\textit{Four Rooms}: A 20 x 20 2D pointmass environment in the shape of four rooms~\citep{sutton1999between}. The observation is the 2D position of the agent, and the agent must specify a target 2D position as the action.
The dynamics of the environment are the following:
first, the agent is teleported to the target position, specified by the action.
Then a Gaussian change in position with mean $0$ and standard deviation $0.0605$ is applied\footnote{In the main paper, we rounded this to $0.06$, but this difference does not matter.}.
If the action would result in the agent moving through or into a wall, then the agent will be stopped at the wall instead.

\textit{Ant}: A MuJoCo~\citep{todorov12mujoco} ant environment. The observation is a 3D position and velocities, orientation, joint angles, and velocity of the joint angles of the ant (8 total). The observation space is 29 dimensions.
The agent controls the ant through the joints, which is 8 dimensions. The goal is a target 2D position, and the reward is the negative Euclidean distance between the achieved 2D position and target 2D position.

\textit{Visual Pusher}: A MuJoCo environment with a 7-DoF Sawyer arm and a small puck on a table that the arm must push to a target position.
The agent controls the arm by commanding $x,y$ position for the end effector (EE).
The underlying state is the EE position, $e$ and puck position $p$.
The evaluation metric is the distance between the goal and final puck positions. The hand goal/state space is a 10x10 cm$^2$ box and the puck goal/state space is a 30x20 cm$^2$ box. Both the hand and puck spaces are centered around the origin. The action space ranges in the interval $[-1, 1]$ in the x and y dimensions.

\textit{Visual Door}: A MuJoCo environment with a 7-DoF Sawyer arm and a door on a table that the arm must pull open to a target angle.
Control is the same as in \textit{Visual Pusher}.
The evaluation metric is the distance between the goal and final door angle, measured in radians.
In this environment, we do not reset the position of the hand or door at the end of each trajectory. The state/goal space is a 5x20x15 cm$^3$ box in the $x, y, z$ dimension respectively for the arm and an angle between $[0, .83]$ radians. The action space ranges in the interval $[-1, 1]$ in the x, y and z dimensions.

\textit{Visual Pickup}: A MuJoCo environment with the same robot as \textit{Visual Pusher}, but now with a different object. The object is cube-shaped, but a larger intangible sphere is overlaid on top so that it is easier for the agent to see.
Moreover, the robot is constrained to move in 2 dimension: it only controls the $y, z$ arm positions.
The $x$ position of both the arm and the object is fixed.
The evaluation metric is the distance between the goal and final object position.
For the purpose of evaluation, $75\%$ of the goals have the object in the air and $25\%$ have the object on the ground. The state/goal space for both the object and the arm is 10cm in the $y$ dimension and 13cm in the $z$ dimension. The action space ranges in the interval $[-1, 1]$ in the $y$ and $z$ dimensions.

\textit{Real World Visual Door}:
A Rethink Sawyer Robot with a door on a table.
The arm must pull the door open to a target angle.
The agent controls the arm by commanding the $x,y,z$ velocity of the EE.
Our controller commands actions at a rate of up to 10Hz with the scale of actions ranging up to 1cm in magnitude.
The underlying state and goal is the same as in \textit{Visual Door}.
Again we do not reset the position of the hand or door at the end of each trajectory.
We obtain images using a Kinect Sensor.
The state/goal space for the environment is a 10x10x10 cm$^3$ box. The action space ranges in the interval $[-1, 1]$  (in cm) in the x, y and z dimensions. The door angle lies in the range $[0, 45]$ degrees.

\section{Goal-Conditioned Reinforcement Learning Minimizes $\gH(\G \mid \SF)$}\label{sec:analysis-appendix}
Some goal-conditioned RL methods such as \citet{wardefarley2018discern,nair2018rig} present methods for minimizing a lower bound for $\gH(\G \mid \SF)$, by approximating $\log p(\G \mid \SF)$ and using it as the reward.
Other goal-conditioned RL methods
~\citep{kaelbling1993goals,lillicrap2015continuous, schaul2015uva, andrychowicz2017her, pong2018tdm,florensa2018selfsupervised}
are not developed with the intention of minimizing the conditional entropy $\gH(\G \mid \SF)$.
Nevertheless, one can see that goal-conditioned RL generally minimizes $\gH(\G \mid \SF)$ by noting that the optimal goal-conditioned policy will deterministically reach the goal.
The corresponding conditional entropy of the goal given the state, $\gH(\G \mid \SF)$, would be zero, since given the current state, there would be no uncertainty over the goal (the goal must have been the current state since the policy is optimal).
So, the objective of goal-conditioned RL can be interpreted as finding a policy such that $\gH(\G \mid \SF) = 0$.
Since zero is the minimum value of $\gH(\G \mid \SF)$, then goal-conditioned RL can be interpreted as minimizing $\gH(\G \mid \SF)$.

\end{document}